\newtheorem{lemma}{Lemma}
\newtheorem{remark}{Remark}
\newtheorem{proposition}{Proposition}%
\newtheorem{definition}{Definition}%
\definecolor{purple}{rgb}{0.2, 0, 1.0}
\definecolor{darkred}{rgb}{0.8, 0.1, 0.1}
{}  
{}
\title{Exact Diffusion Inversion via Bi-directional Integration Approximation}
\author{%
  Guoqiang Zhang  \\
  University of Exeter \\
  \texttt{g.z.zhang@exeter.ac.uk} \\
   \And  J.P. Lewis
    \\
   Nvidia \\
\texttt{jpl@nvidia.com} \\
   \And
   W. Bastiaan Kleijn \\ 
Victoria University of Wellington \\ \texttt{bastiaan.kleijn@vuw.ac.nz} \\
}
\begin{document}

\maketitle
\begin{abstract}

Recently, various methods have been proposed to address the inconsistency issue of DDIM inversion to enable image editing, such as EDICT \cite{Wallace23EDICT} and Null-text inversion \cite{Mokady23NullTestInv}. However, the above methods introduce considerable computational overhead. In this paper, we propose a new  
technique, named \emph{bi-directional integration approximation} (BDIA), to perform exact diffusion inversion with neglible computational overhead. Suppose we would like to estimate the next diffusion state $\boldsymbol{z}_{i-1}$ at timestep $t_i$ with the historical information $(i,\boldsymbol{z}_i)$ and $(i+1,\boldsymbol{z}_{i+1})$. We first obtain the estimated Gaussian noise $\hat{\boldsymbol{\epsilon}}(\boldsymbol{z}_i,i)$, and then apply the DDIM update procedure twice for approximating the ODE integration over the next time-slot $[t_i, t_{i-1}]$ in the forward manner and the previous time-slot $[t_i, t_{t+1}]$ in the backward manner. The DDIM step for the previous time-slot is used to refine the integration approximation made earlier when computing $\boldsymbol{z}_i$. A nice property of BDIA-DDIM is that the update expression for $\boldsymbol{z}_{i-1}$ is a linear combination of $(\boldsymbol{z}_{i+1}, \boldsymbol{z}_i, \hat{\boldsymbol{\epsilon}}(\boldsymbol{z}_i,i))$. This allows for exact backward computation of $\boldsymbol{z}_{i+1}$ given $(\boldsymbol{z}_i, \boldsymbol{z}_{i-1})$, thus leading to exact diffusion inversion. It is demonstrated with experiments that (round-trip) BDIA-DDIM is particularly effective for image editing.  
Our experiments further show that BDIA-DDIM produces markedly better image sampling qualities than DDIM for text-to-image generation, thanks to the more accurate integration approximation. 

BDIA can also be applied to improve the  performance of other ODE solvers in addition to DDIM. In our work, it is found that applying BDIA to the EDM sampling procedure produces consistently better performance over four pre-trained models.


\end{abstract}

\section{Introduction}

As one type of generative models, diffusion probabilistic models (DPMs) have made significant progress in recent years. The pioneering work \cite{Dickstein15DPM} applied non-equilibrium statistical physics to the estimation of probabilistic data distributions. In doing so, a Markov forward diffusion process is constructed by systematically inserting additive noise into a data sample until the data distribution is almost destroyed. The data distribution is then gradually restored by a reverse diffusion process starting from a simple parametric distribution. The main advantage of DPM over classic tractable models (e.g., HMMs, GMMs, see \cite{Bishop06}) is that DPM can accurately model both the high and low likelihood regions of the data distribution by estimating a sequence of progressively less noise-perturbed data distributions. In comparison to generative adversarial networks (GANs) \cite{Goodfellow14GAN, Arjovsky17WGAN, Gulrajani17WGANGP, Sauer22StyleGAN}, DPMs exhibit more stable training dynamics by avoiding adversarial learning, as well as showing better sample diversity.

\begin{figure}[t!]
\centering
\includegraphics[width=120mm]{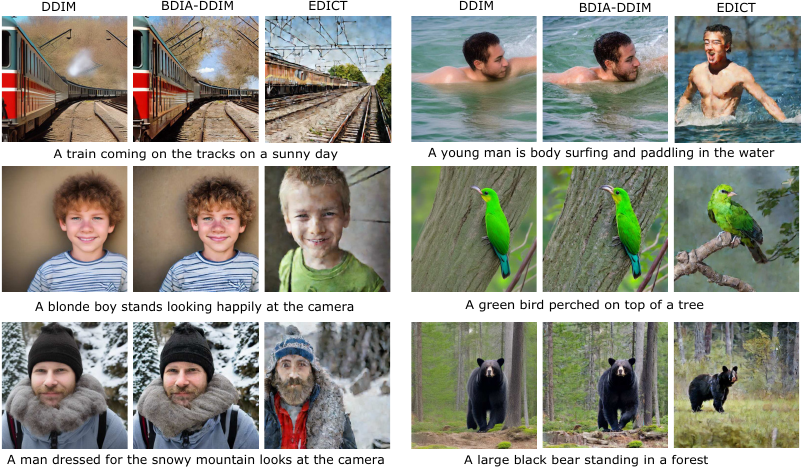}
\vspace*{-0.2cm}
\caption{ Comparison of generated images by using BDIA-DDIM, DDIM, and EDICT with 10 timesteps over StableDiffusion V2. See Subsection \ref{subsec:exp_imgSampling} for details. } 
\label{fig:BDIADDIM_t2i}
\vspace*{-0.0cm}
\end{figure}

Following the work of \cite{Dickstein15DPM}, various learning and/or sampling strategies have been proposed to improve the performance of DPMs, which include, for example, denoising diffusion probabilistic models (DDPMs) \cite{Ho20DDPM}, denoising diffusion implicit models (DDIMs) \cite{Song21DDIM}, improved DDIMs \cite{Nichol21DDPM, Dhariwal21DPM}, latent diffusion models (LDMs)\cite{Rombach22LDM}, score matching with Langevin dynamics (SMLD) \cite{Song19, Song21DPM, Song21SDE_gen}, analytic-DPMs \cite{Bao22DPM, Bao22DPM_cov},  optimized denoising schedules \cite{Kingma21DDPM, Chen20WaveGrad, Lam22BDDM}, guided diffusion strategies \cite{Nichol22GLIDE, Kim22GuidedDiffusion}, and classifier-free guided diffusion \cite{Ho22ClassiferFreeGuide}. It is worth noting that DDIM can be interpreted as a first-order ODE solver. As an extension of DDIM, various high-order ODE solvers have been proposed, such as EDM \cite{Karras22EDM}, DEIS \cite{Zhang22DEIS}, PNDM \cite{Liu22PNDM}, DPM-Solvers \cite{Lu22DPM_Solver}, and IIA-EDM and IIA-DDIM \cite{GuoqiangIIA23}.

In recent years, image-editing via diffusion models has attracted increasing attention in both academia and industry. One important operation for editing a real image is to first perform forward process on the image to obtain the final noise representation and then perform a backward process with embedded editing to generate the desired image \cite{Rombach22StableDiffusion, Saharia22Imagen}. DDIM inversion has been widely used to perform the above forward and backward processes \cite{Shi23DragDiffusion}. A major issue with DDIM inversion is that the intermediate diffusion states in the forward and backward processes may be inconsistent due to the inherent approximations (see Subsection~\ref{subsec:DDIM_review}). This issue becomes significant when utilizing classifier-free guidance in text-to-image editing \cite{Shi23DragDiffusion}.  The newly generated images are often perceptually far away from the original ones, which is undesirable for image-editing.

Recently, two methods have been proposed to address the inconsistency issue of DDIM inversion. Specifically, 
the work of \cite{Mokady23NullTestInv} proposed a technique named \emph{null-text inversion} to push the diffusion states of the backward process to be optimally close to those of the forward process via iterative optimization. The null-text inputs to the score neural network are treated as free variables in the optimization procedure.
In \cite{Wallace23EDICT}, the authors proposed the EDICT technique to enforce exact DDIM inversion. Their basic idea is to introduce an auxiliary diffusion state and then perform alternating updates on the primal and auxiliary diffusion states, which is inspired by the flow generative framework \cite{Kingma18Glow,Dinh14Nice,Dinh16DensityEsti}. One drawback of EDICT is that the number of neural function
evaluations (NFEs) has to be doubled in comparison to DDIM inversion (See Subsection~\ref{subsec:EDICT}). Another related line of research work is DDPM inversion (see \cite{Huberman23DDPMInversion}).

In this paper, we propose a new  technique to enforce exact DDIM inversion with negligible computational overhead, reducing the number of NFEs required in EDICT by half. Suppose we are in a position to estimate the next diffusion state $\boldsymbol{z}_{i-1}$ at timestep $t_i$ by utilizing the two most recent states $\boldsymbol{z}_i$ and  $\boldsymbol{z}_{i+1}$. With the estimated Gaussian noise $\hat{\boldsymbol{\epsilon}}(\boldsymbol{z}_i,i)$, we perform the DDIM update procedure twice for approximating the ODE integration over the next time-slot $[t_i, t_{i-1}]$ in the forward manner and the previous time-slot $[t_i,t_{i+1}]$ in the backward manner. The DDIM for the previous time-slot is employed to refine the integration approximation made earlier when computing $\boldsymbol{z}_i$. As a result, the expression for $\boldsymbol{z}_{i-1}$ becomes a linear combination of $(\boldsymbol{z}_{i+1}, \boldsymbol{z}_{i},\hat{\boldsymbol{\epsilon}}(\boldsymbol{z}_i,i))$, and naturally facilitates exact diffusion inversion.  We refer to the above  technique as \emph{bi-directional integration approximation (BDIA)}. 
Experiments demonstrate that BDIA-DDIM produces promising results on both image sampling and image editing (Figs.~\ref{fig:BDIADDIM_t2i},~\ref{fig:IIA},~\ref{fig:BDIA_DDIM}, Table~\ref{tab:BDIA_DDIM}). We have also applied BDIA to EDM, and found that the image qualities are also improved considerably (Table~\ref{tab:BDIA_EDM}). 



\section{Preliminary}
\textbf{Forward and reverse diffusion processes:}
Suppose the data sample $\boldsymbol{x}\in \mathbb{R}^d$ follows a data distribution $p_{data}(\boldsymbol{x})$ with a bounded variance. A forward diffusion process progressively adds Gaussian noise to the data samples $\boldsymbol{x}$ to obtain $\boldsymbol{z}_t$ as $t$ increases from 0 until $T$. The conditional distribution  of $\boldsymbol{z}_t$ given $\boldsymbol{x}$ can be represented as
\begin{align}
q_{t|0}(\boldsymbol{z}_t|\boldsymbol{x}) = \mathcal{N}(\boldsymbol{z}_t|\alpha_t\boldsymbol{x}, \sigma_t^2\boldsymbol{I})\quad \boldsymbol{z}_t = \alpha_t\boldsymbol{x}+\sigma_t \boldsymbol{\epsilon},
\label{equ:forwardGaussian}
\end{align}
where $\alpha_t$ and $\sigma_t$ are assumed to be differentiable functions of $t$ with bounded derivatives. We use $q(\boldsymbol{z}_t; \alpha_t,\sigma_t)$ to denote the marginal distribution of $\boldsymbol{z}_t$. The samples of the distribution $q(\boldsymbol{z}_T;\alpha_T,\sigma_T)$ should be practically indistinguishable from pure Gaussian noise if $\sigma_T \gg \alpha_T$.

The reverse process of a diffusion model firstly draws a sample $\boldsymbol{z}_T$ from $\mathcal{N}(\boldsymbol{0}, {\sigma}_{T}^2\boldsymbol{I})$, and then progressively denoises it to obtain a sequence of diffusion states  $\{\boldsymbol{z}_{t_i}\sim p(\boldsymbol{z};\alpha_{t_i},\sigma_{t_i})\}_{i=0}^N$,
where we use the notation $p(\cdot)$ to indicate that reverse sample distribution might not be identical to the forward distribution $q(\cdot)$ because of practical approximations.  It is expected that the final sample $\boldsymbol{z}_{t_0}$ is roughly distributed according to $p_{data}(\boldsymbol{x})$, i.e., $p_{data}(\boldsymbol{x})\approx p(\boldsymbol{z}_{t_0};\alpha_{t_0},\sigma_{t_0})$ where $t_0=0$.

\textbf{ODE formulation:} In \cite{Song21SDE_gen},  Song et al. present a so-called \emph{probability flow} ODE which shares the same marginal distributions as $\boldsymbol{z}_t$ in (\ref{equ:forwardGaussian}).  Specifically, with the formulation (\ref{equ:forwardGaussian}) for a forward diffusion process, its reverse ODE form can be represented as 
\begin{align}
d\boldsymbol{z} =& \underbrace{\left[f(t)\boldsymbol{z}_t-\frac{1}{2}g^2(t)\nabla_{\boldsymbol{z}}\log q(\boldsymbol{z}_t; \alpha_t,\sigma_t)\right]}_{\boldsymbol{d}(\boldsymbol{z}_t, t)}dt,
\label{equ:ODE_gen}
\end{align}
where $\boldsymbol{d}(\boldsymbol{z}_t,t)$ denotes the gradient vector at time $t$, and the two functions $f(t)$ and $g(t)$ are represented in terms of $(\alpha_t, \sigma_t)$ as
\begin{align}
f(t) = \frac{d\log \alpha_t}{dt},\quad g^2(t)=\frac{d\sigma_t^2}{dt}-2\frac{d\log\alpha_t}{dt}\sigma_t^2. \label{equ:ODE_gen2}
\end{align}
 $\nabla_{\boldsymbol{z}}\log q(\boldsymbol{z};\alpha_t,\sigma_t)$ in (\ref{equ:ODE_gen}) is the score function  \cite{Hyvarinen05ScoreMatching} pointing towards higher density of data samples at the given noise level $(\alpha_t,\sigma_t)$. One nice property of the score function is that it does not depend on the generally intractable normalization constant of the underlying density function $q(\boldsymbol{z};\alpha_t,\sigma_t)$. 

As $t$ increases, the probability flow ODE (\ref{equ:ODE_gen}) continuously reduces the noise level of the data samples in the reverse process. In the ideal scenario where no approximations are introduced in (\ref{equ:ODE_gen}), the sample distribution $p(\boldsymbol{z};\alpha_t,\sigma_t)$ approaches $p_{data}(\boldsymbol{x})$ as $t$ goes from $T$ to 0. As a result, the sampling process of a diffusion model boils down to solving the ODE form (\ref{equ:ODE_gen}), where randomness is only introduced in the initial sample at time $T$. This has opened up the research opportunity of exploiting different ODE solvers in diffusion-based sampling processes.     

\textbf{Denoising score matching:} To be able to utilize (\ref{equ:ODE_gen}) for sampling, one needs to specify a particular form of the score function $\nabla_{\boldsymbol{z}}\log q(\boldsymbol{z};\alpha_t,\sigma_t)$.  One common approach is to train a noise estimator $ \hat{\boldsymbol{\epsilon}}_{\boldsymbol{\theta}}$ by minimizing the expected $L_2$ error for samples drawn from $q_{data}$ (see \cite{Ho20DDPM, Song21SDE_gen, Song21DDIM}):
\begin{align}
\mathbb{E}_{\boldsymbol{x}\sim p_{data}}\mathbb{E}_{\boldsymbol{\epsilon}\sim \mathcal{N}(\boldsymbol{0}, \sigma_t^2\boldsymbol{I})}\|\hat{\boldsymbol{\epsilon}}_{\boldsymbol{\theta}}(\alpha_t \boldsymbol{x}+\sigma_t\boldsymbol{\epsilon},t)-\boldsymbol{\epsilon}\|_2^2,\label{equ:epsilon_training}
\end{align}
where $(\alpha_t, \sigma_t)$ are from the forward process (\ref{equ:forwardGaussian}). The common practice in diffusion models is to utilize a neural network of U-Net architecture \cite{Ronneberger15Unet} to represent the noise estimator $\hat{\boldsymbol{\epsilon}}_{\boldsymbol{\theta}}$. With (\ref{equ:epsilon_training}), the score function can then be represented in terms of 
$\hat{\boldsymbol{\epsilon}}_{\boldsymbol{\theta}}(\boldsymbol{z}_t; t)$ as (see also (229) of \cite{Karras22EDM})
\begin{align}
\nabla_{\boldsymbol{z}}\log q(\boldsymbol{z}_t;\alpha_t,\sigma_t) =\frac{-(\boldsymbol{z}_t-\alpha_t \boldsymbol{x})}{\sigma_t^2}  = -\hat{\boldsymbol{\epsilon}}_{\boldsymbol{\theta}}(\boldsymbol{z}_t; t)/\sigma_t. \label{equ:}
\end{align}

Alternatively, the score function can be represented in terms of an estimator for $\boldsymbol{x}$ (see \cite{Karras22EDM}). The functional form for the noise level $(\alpha_t,\sigma_t)$ also plays an important role in the sampling quality in practice. For example, the setup $(\alpha_t,\sigma_t)=(1,\sqrt{t})$ was studied in \cite{Song21SDE_gen}, which corresponds to constant-speed heat diffusion. The recent work \cite{Karras22EDM} found that a simple form of $(\alpha_t,\sigma_t)=(1,t)$ works well in practice.

\begin{figure}[t!]
\centering
\includegraphics[width=120mm]{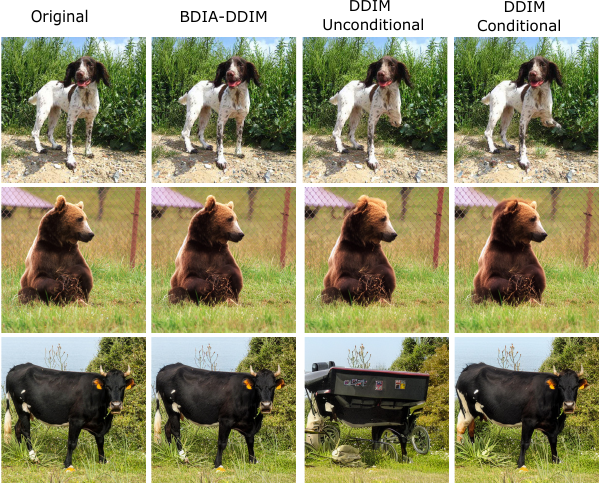}
\vspace*{-0.2cm}
\caption{ BDIA-DDIM produces perceptually better image reconstruction than DDIM. 
} 
\label{fig:IIA}
\vspace*{-0.0cm}
\end{figure}

\section{Bi-directional Integration Approximation (BDIA) for DDIM}  
In this section, we first review DDIM inversion and EDICT as an extension of DDIM inversion. We then
present our BDIA technique to enable exact diffusion inversion.  

\subsection{Review of DDIM inversion} 
\label{subsec:DDIM_review}

We first consider the update expression of DDIM for sampling, which is in fact a first-order solver for the ODE formulation (\ref{equ:ODE_gen})-(\ref{equ:ODE_gen2}) (see \cite{Lu22DPM_Solver,Zhang22DEIS}), given by 
\begin{align}
    \boldsymbol{z}_{i-1} \hspace{-0.2mm}=& \hspace{0.7mm} \alpha_{i-1} \left(\frac{\boldsymbol{z}_i \hspace{-0.3mm}-\hspace{-0.3mm} \sigma_{i}\hat{\boldsymbol{\epsilon}}_{\boldsymbol{\theta}}(\boldsymbol{z}_i, i) }{\alpha_{i}}\right)+\hspace{0.5mm}\sigma_{i-1}\hat{\boldsymbol{\epsilon}}_{\boldsymbol{\theta}}(\boldsymbol{z}_i, i) \label{equ:DDIM1}
    \\
    =& a_i \boldsymbol{z}_i +b_i\hat{\boldsymbol{\epsilon}}_{\boldsymbol{\theta}}(\boldsymbol{z}_i, i)  \label{equ:DDIM2} \\
    \approx& \boldsymbol{z}_{i}+\int_{t_i}^{t_{i-1}}\boldsymbol{d}(\boldsymbol{z}_{\tau},\tau)d\tau
    \label{equ:DDIM3},
\end{align}
where $a_i=\alpha_{i-1}/\alpha_i$ and $b_i=\sigma_{i-1}-\sigma_i\alpha_{i-1}/\alpha_i$. It is clear from (\ref{equ:DDIM1})-(\ref{equ:DDIM3}) that the integration $\int_{t_i}^{t_{i-1}}\boldsymbol{d}(\boldsymbol{z}_{\tau},\tau)d\tau$ is approximated by the forward DDIM update. That is, only the diffusion state $\boldsymbol{z}_i$ at the starting timestep $t_i$ is used in the integration approximation.

To perform DDIM inversion, $\boldsymbol{z}_i$ can be approximated in terms of $\boldsymbol{z}_{i-1}$ as
\begin{align}
\boldsymbol{z}_i &=\alpha_i\left(\frac{\boldsymbol{z}_{i-1}-\sigma_{i-1}\hat{\boldsymbol{\epsilon}}_{\boldsymbol{\theta}}(\boldsymbol{z}_i,i)}{\alpha_{i-1}}\right)+\sigma_i\hat{\boldsymbol{\epsilon}}_{\boldsymbol{\theta}}(\boldsymbol{z}_i,i) \label{equ:DDIM_inv1} \\
&\approx\alpha_i\left(\frac{\boldsymbol{z}_{i-1}-\sigma_{i-1}\hat{\boldsymbol{\epsilon}}_{\boldsymbol{\theta}}(\boldsymbol{z}_{i-1},i)}{\alpha_{i-1}}\right)+\sigma_i\hat{\boldsymbol{\epsilon}}_{\boldsymbol{\theta}}(\boldsymbol{z}_{i-1},i), \label{equ:DDIM_inv2}
\end{align}
where $\boldsymbol{z}_i$ in the RHS of (\ref{equ:DDIM_inv1}) is replaced with $\boldsymbol{z}_{i-1}$ to facilitate explicit computation. This naturally introduces approximation errors, leading to inconsistency of the diffusion states between the forward and backward processes. 

\subsection{Review of EDICT for exact diffusion inversion}
\label{subsec:EDICT}
Inspired by the flow generative framework \cite{Kingma18Glow}, the recent work \cite{Wallace23EDICT} proposed EDICT to enforce exact diffusion inversion. The basic idea is to introduce an auxiliary diffusion state $\boldsymbol{y}_i$ to be coupled with  $\boldsymbol{z}_i$ at every timestep $i$. The next pair of diffusion states $(\boldsymbol{z}_{i-1}, \boldsymbol{y}_{i-1})$ is then computed in an alternating fashion as  
\begin{align}
\boldsymbol{z}_{i}^{\text{inter}} &= a_i\boldsymbol{z}_{i} + b_i\boldsymbol{\epsilon}_{\boldsymbol{\theta}}(\boldsymbol{y}_{i},i) \label{equ:EDICT_r1} \\
\boldsymbol{y}_{i}^{\text{inter}} &= a_i\boldsymbol{y}_{i} + b_i\boldsymbol{\epsilon}_{\boldsymbol{\theta}}(\boldsymbol{z}_{i}^{\text{inter}},i) \label{equ:EDICT_r2}  \\
\boldsymbol{z}_{i-1} &= p\boldsymbol{z}_{i}^{\text{inter}}+(1-p)\boldsymbol{y}_{i}^{\text{inter}} \label{equ:EDICT_r3}  \\
\boldsymbol{y}_{i-1} &= p\boldsymbol{y}_{i}^{\text{inter}}+(1-p)\boldsymbol{z}_{i-1}, \label{equ:EDICT_r4}  
\end{align}
where $p\in [0,1]$ is the weighting factor in the mixing operations and the pair $(\boldsymbol{z}_i^{\text{inter}}, \boldsymbol{y}_i^{\text{inter}})$ represents the intermediate diffusion states. According to \cite{Wallace23EDICT}, the two mixing operations (\ref{equ:EDICT_r3})-(\ref{equ:EDICT_r4}) are introduced to make the update procedure stable. 

Due to the alternating update formalism in (\ref{equ:EDICT_r1})-(\ref{equ:EDICT_r4}), 
the computation can be inverted to obtain $(\boldsymbol{z}_i, \boldsymbol{y}_i)$ in terms of $(\boldsymbol{z}_{i-1}, \boldsymbol{y}_{i-1})$ as

\begin{align}
\boldsymbol{y}_{i}^{\text{inter}} &= (\boldsymbol{y}_{i-1}-(1-p)\boldsymbol{z}_{i-1})/p \label{equ:EDICT_f1}  \\
\boldsymbol{z}_{i}^{\text{inter}} &= (\boldsymbol{z}_{i-1}-(1-p)\boldsymbol{y}_{i}^{\text{inter}})/p \label{equ:EDICT_f2}  \\
\boldsymbol{y}_{i} &= (\boldsymbol{y}_{i}^{\text{inter}} - b_i\boldsymbol{\epsilon}_{\boldsymbol{\theta}}(\boldsymbol{z}_{i}^{\text{inter}},i))/a_i \label{equ:EDICT_f3}  \\
\boldsymbol{x}_{i} &= (\boldsymbol{z}_{i}^{\text{inter}} - b_i\boldsymbol{\epsilon}_{\boldsymbol{\theta}}(\boldsymbol{y}_{i},i)/a_i \label{equ:EDICT_f4} 
\end{align}
Unlike (\ref{equ:DDIM_inv1})-(\ref{equ:DDIM_inv2}), the inversion of (\ref{equ:EDICT_r1})-(\ref{equ:EDICT_r4}) does not involve any approximation, thus enabling exact diffusion inversion.  

However, it is clear from the above equations that the NFE that EDICT has to perform is two times the NFE required for DDIM. This makes the method computationally expensive in practice. It is highly desirable to reduce the NFE in EDICT while retaining exact diffusion inversion. We provide such a method in the next subsection.

\subsection{BDIA-DDIM for exact diffusion inversion}
\noindent \textbf{Reformulation of DDIM update expression}: In this section, we present our new technique BDIA to assist DDIM in achieving exact diffusion inversion. To do so, we first reformulate the update expression for $\boldsymbol{z}_{i-1}$ in (\ref{equ:DDIM2}) in terms of all the historical diffusion states $\{\boldsymbol{z}_{j}\}_{j=N}^i$ as
\begin{align}
\boldsymbol{z}_{i-1} &=\boldsymbol{z}_N+\sum_{j=N}^{i} \Delta(t_j\rightarrow t_{j-1}|\boldsymbol{z}_j) \label{equ:ddim_f_all0} \\
&\approx\boldsymbol{z}_N+\sum_{j=N}^i\int_{t_j}^{t_{j-1}} \boldsymbol{d}(\boldsymbol{z}_{\tau}, \tau)d\tau ,
\label{equ:ddim_f_all1}
\end{align}
where we use $\Delta(t_j\rightarrow t_{j-1}|\boldsymbol{z}_j) $ to denote approximation of the integration $\int_{t_j}^{t_{j-1}} \boldsymbol{d}(\boldsymbol{z}_{\tau},\tau)d\tau$ via the forward DDIM step, given by
\begin{align}
\Delta(t_j\rightarrow t_{j-1}|\boldsymbol{z}_j) &=\boldsymbol{z}_{j-1} - \boldsymbol{z}_j \nonumber \\
&=a_j\boldsymbol{z}_{j} + b_j\hat{\boldsymbol{\epsilon}}_{\boldsymbol{\theta}}(\boldsymbol{z}_{j},j)-\boldsymbol{z}_j. \label{equ:ddim_f}
\end{align}

\noindent\textbf{Extending forward DDIM by average of forward and backward DDIM}: We argue that, in principle,  each integration  $\int_{t_j}^{t_{j-1}} \boldsymbol{d}(\boldsymbol{z}_{\tau},\tau)d\tau$ in (\ref{equ:ddim_f_all1}) can be alternatively approximated by taking average of both the forward and backward DDIM updates, expressed as   
\begin{align}
\int_{t_{j}}^{t_{j-1}}\boldsymbol{d}(\boldsymbol{z}_{\tau},\tau)d\tau \approx (1-\phi_{i,j}) \Delta(t_{j}\rightarrow  t_{j-1}|\boldsymbol{z}_{j})  - \phi_{i,j} \Delta(t_{j-1}\rightarrow t_{j}|\boldsymbol{z}_{j-1}), 
\label{equ:ddim_fb}
\end{align}
where the scalar $\phi_{i,j}\in [0,1]$, and the notation $\Delta(t_{j-1}\rightarrow t_{j}|\boldsymbol{z}_{j-1})$ denotes the backward DDIM step from $t_{j-1}$ to $t_j$. The minus sign in front of $\Delta(t_{j-1}\rightarrow t_{j}|\boldsymbol{z}_{j-1})$ is due to integration over reverse time. The update expression for the backward DDIM step can be represented as 
\begin{align}
\Delta(t_{j-1}\rightarrow t_{j}|\boldsymbol{z}_{j-1}) 
&=\alpha_{j} \left(\frac{\boldsymbol{z}_{j-1} \hspace{-0.3mm}-\hspace{-0.3mm} \sigma_{j-1}\hat{\boldsymbol{\epsilon}}_{\boldsymbol{\theta}}(\boldsymbol{z}_{j-1}, j-1) }{\alpha_{j-1}}\right)+\hspace{0.5mm}\sigma_{j}\hat{\boldsymbol{\epsilon}}_{\boldsymbol{\theta}}(\boldsymbol{z}_{j-1}, j-1) -\boldsymbol{z}_{j-1} \label{equ:ddim_fb1} \\
&=\frac{\boldsymbol{z}_{j-1}}{a_j} - \frac{b_j}{a_j}\hat{\boldsymbol{\epsilon}}_{\boldsymbol{\theta}}(\boldsymbol{z}_{j-1},j-1) - \boldsymbol{z}_{j-1}. \label{equ:ddim_b}
\end{align}
Note that in practice, we first need to perform a forward DDIM step over $[t_{j},t_{j-1}]$ to obtain $\boldsymbol{z}_{j-1}$, and then we are able to perform the backward DDIM step computing $\Delta(t_{j-1}\rightarrow t_{j}|\boldsymbol{z}_{j-1})$.

\noindent\textbf{Bi-directional integration approximation (BDIA)}: 
We now present our new BDIA technique. Our primary goal is to develop an update expression for each $\boldsymbol{z}_{i-1}$ as a linear combination of $(\boldsymbol{z}_{i+1}, \boldsymbol{z}_i,\hat{\boldsymbol{\epsilon}}_{\boldsymbol{\theta}}(\boldsymbol{z}_i,i))$. As will be explained in the following, the summation of the integrations $\sum_{j=N}^{i}\int_{t_j}^{t_{j-1}}\boldsymbol{d}(\boldsymbol{z}_{\tau},\tau)d\tau$ for $\boldsymbol{z}_{i-1}$ will involve both forward DDIM updates and backward DDIM updates. 

Suppose we are at the initial time step $t_N$ with state $\boldsymbol{z}_N$. Then the next state $\boldsymbol{z}_{N-1}$ is computed by applying the forward DDIM (see (\ref{equ:ddim_f})):
\begin{align}
\boldsymbol{z}_{N-1} &= a_N\boldsymbol{z}_N +b_N\hat{\boldsymbol{\epsilon}}_{\boldsymbol{\theta}}(\boldsymbol{z}_N, N) \nonumber \\
&={\boldsymbol{z}}_N +  \Delta(t_{N}\rightarrow t_{N-1}|\boldsymbol{z}_{N}).\label{equ:BDIA_init}
\end{align}
Upon obtaining $\boldsymbol{z}_{N-1}$, we are able to compute $\Delta(t_{N-1}\rightarrow t_{N}|\boldsymbol{z}_{N-1})$ over the previous time-slot $[t_{N-1}, t_N]$ and $\Delta(t_{N-1}\rightarrow t_{N-2}|\boldsymbol{z}_{N-1})$ over the next time-slot $[t_{N-1}, t_{N-2}]$. Consequently, the integration $\int_{t_N}^{t_{N-1}}\boldsymbol{d}(\boldsymbol{z}_{\tau},\tau)d\tau$ can be  approximated by utilizing both $-\Delta(t_{N-1}\rightarrow t_{N}|\boldsymbol{z}_{N-1})$ and $\Delta(t_{N}\rightarrow t_{N-1}|\boldsymbol{z}_{N})$ as in (\ref{equ:ddim_fb}). We define the update for $\boldsymbol{z}_{i-1}$ for $i\leq N-1$ as below:

\begin{definition}
When $i\leq N-1$, let the diffusion state $\boldsymbol{z}_{i-1}$ be computed in terms of $(\boldsymbol{z}_i, \boldsymbol{z}_{i+1})$ as 
\begin{align}
\boldsymbol{z}_{i-1} &= \gamma(\boldsymbol{z}_{i+1}-\boldsymbol{z}_i) 
-\gamma\left(\frac{\boldsymbol{z}_{i}}{a_{i+1}} \hspace{-0.7mm}-\hspace{-0.7mm}\frac{b_{i+1}}{a_{i+1}}\hat{\boldsymbol{\epsilon}}_{\boldsymbol{\theta}}(\boldsymbol{z}_{i},i)-\boldsymbol{z}_i\right) + \Big[a_{i}\boldsymbol{z}_{i} \hspace{-0.7mm}+\hspace{-0.7mm} b_{i}\hat{\boldsymbol{\epsilon}}_{\boldsymbol{\theta}}(\boldsymbol{z}_{i}, i)\Big]  \label{equ:BDIA_i_ave0} \\
&=\boldsymbol{z}_{i+1}\underbrace{-(1-\gamma)(\boldsymbol{z}_{i+1}-\boldsymbol{z}_i)-\gamma\Delta(t_{i}\rightarrow t_{i+1}|\boldsymbol{z}_i)}_{\approx \int_{t_{i+1}}^{t_i}\boldsymbol{d}(\boldsymbol{z}_{\tau}, \tau)d\tau} + \underbrace{\Delta(t_{i}\rightarrow t_{i-1}|\boldsymbol{z}_i)}_{\approx \int_{t_{i}}^{t_{i-1}}\boldsymbol{d}(\boldsymbol{z}_{\tau}, \tau)d\tau} 
\label{equ:BDIA_i_ave}
\end{align}
where $\gamma\in [0,1]$. For the special case that $\gamma=1$, (\ref{equ:BDIA_i_ave}) takes a simple form 
\begin{align}
\boldsymbol{z}_{i-1}&=\boldsymbol{z}_{i+1}-\Delta(t_{i}\rightarrow t_{i+1}|\boldsymbol{z}_i) + \Delta(t_{i}\rightarrow t_{i-1}|\boldsymbol{z}_i).
\label{equ:BDIA_i}
\end{align}
\end{definition}

\begin{remark}
We note that the update expression (\ref{equ:BDIA_i_ave}) is in fact not limited to DDIM. As long as $
\Delta(t_i\rightarrow t_{i+1}|\boldsymbol{z}_i)$ and $
\Delta(t_i\rightarrow t_{i-1}|\boldsymbol{z}_i)$ represent backward and forward integration approximations conditioned on $\boldsymbol{z}_i$, (\ref{equ:BDIA_i_ave}) can then be applied to enable exact inversion. One can also easily design high-order BDIA solvers as an extension of (\ref{equ:BDIA_i_ave}). Considering the 2nd order BDIA solver,  $\boldsymbol{z}_{i-1}$ can be computed in terms of $(\boldsymbol{z}_i, \boldsymbol{z}_{i+1},\boldsymbol{z}_{i+2})$ as 
\begin{align}
&\boldsymbol{z}_{i-1} \nonumber \\
&\hspace{-0.6mm}=\hspace{-0.6mm}\boldsymbol{z}_{i+2}\underbrace{\hspace{-0.6mm}-\hspace{-0.2mm}(1\hspace{-0.8mm}-\hspace{-0.8mm}\gamma_2)(\boldsymbol{z}_{i+2} \hspace{-0.8mm}-\hspace{-0.8mm}\boldsymbol{z}_{i+1}) \hspace{-0.6mm}-\hspace{-0.6mm}\gamma_2\Delta(t_{i+1}\hspace{-1mm}\rightarrow \hspace{-0.8mm} t_{i+2}|\boldsymbol{z}_i,\boldsymbol{z}_{i+1})}_{\approx \int_{t_{i+2}}^{t_{i+1}}\boldsymbol{d}(\boldsymbol{z}_{\tau}, \hspace{-0.6mm}\tau)d\tau} \nonumber\\
&\hspace{3.5mm} \underbrace{-(1\hspace{-0.6mm}-\hspace{-0.6mm}\gamma_1)(\boldsymbol{z}_{i+1} \hspace{-0.6mm}-\hspace{-0.6mm}\boldsymbol{z}_i)\hspace{-0.6mm}-\hspace{-0.6mm}\gamma_1\Delta(t_{i}\rightarrow t_{i+1}|\boldsymbol{z}_i, \boldsymbol{z}_{i+1})}_{\approx \int_{t_{i+1}}^{t_{i}}\boldsymbol{d}(\boldsymbol{z}_{\tau}, \tau)d\tau}  \nonumber \\
&\hspace{3.5mm} + \underbrace{\Delta(t_{i}\rightarrow t_{i-1}|\boldsymbol{z}_i, \boldsymbol{z}_{i+1})}_{\approx \int_{t_{i}}^{t_{i-1}}\boldsymbol{d}(\boldsymbol{z}_{\tau}, \tau)d\tau}, 
\label{equ:BDIA_i_2nd_ave}
\end{align}
where $\gamma_1,\gamma_2\in [0,1]$. 
 It is expected that with both $\boldsymbol{z}_i$ and $\boldsymbol{z}_{i+1}$, the two integration approximations $\Delta(t_{i}\rightarrow t_{i+1}|\boldsymbol{z}_i, \boldsymbol{z}_{i+1})$ and $\Delta(t_{i+1}\rightarrow t_{i+2}|\boldsymbol{z}_i, \boldsymbol{z}_{i+1})$ would become more accurate than the ones based on  $\boldsymbol{z}_i$ or $\boldsymbol{z}_{i+1}$ alone. 
 Similarly, one can also design a 3rd order BDIA solver.
\end{remark}

It is clear from (\ref{equ:BDIA_i_ave}) that in computation of $\boldsymbol{z}_{i-1}$, the integration $\int_{t_{i+1}}^{t_i}\boldsymbol{d}(\boldsymbol{z}_{\tau},\tau)d\tau$ for the previous time-slot $[t_{i+1}, t_i]$ is approximated by taking average of the backward DDIM update $-\Delta(t_i\rightarrow t_{i+1}|\boldsymbol{z}_i)$ and the integration approximation $\boldsymbol{z}_i-\boldsymbol{z}_{i+1}$ made earlier for the same time-slot. When $\gamma=1$, the integration $\int_{t_{i+1}}^{t_i}\boldsymbol{d}(\boldsymbol{z}_{\tau},\tau)d\tau$ is simply approximated by the backward DDIM update $-\Delta(t_i\rightarrow t_{i+1}|\boldsymbol{z}_i)$.  

Next we derive the explicit expression for $\boldsymbol{z}_i$ in terms of all the historical forward and backward DDIM updates, which is summarized in a proposition below:
\begin{proposition} Let $\boldsymbol{z}_{N-1}$ and $\{\boldsymbol{z}_i| i\leq N-2\}$ be computed by following (\ref{equ:BDIA_init}) and (\ref{equ:BDIA_i_ave}) sequentially.  Then for each timestep $i\leq N-2$,  $\boldsymbol{z}_{i}$ can be represented in the form of 
\begin{align}
\boldsymbol{z}_{i}
&= \boldsymbol{z}_{N}+\sum_{j=i+2}^{N} \left[\frac{1-(-\gamma)^{j-i}}{1+\gamma}\Delta(t_{j}\rightarrow t_{j-1}|\boldsymbol{z}_{j})- \frac{\gamma+(-\gamma)^{j-i}}{1+\gamma}\Delta(t_{j-1}\rightarrow t_{j}|\boldsymbol{z}_{j-1}))\right] \label{equ:BDIA_fb_int_ave0}  \\
&\hspace{3mm}+\Delta(t_{i+1}\rightarrow t_{i}|\boldsymbol{z}_{i+1}).
\label{equ:BDIA_fb_int_ave} 
\end{align}
When $\gamma=1$, (\ref{equ:BDIA_fb_int_ave}) can be simplied to be
\begin{align}
\boldsymbol{z}_i =& \boldsymbol{z}_N + \Delta(t_{N}-t_{N-1}|\boldsymbol{z}_{N})\mathrm{mod}(N-i, 2) \nonumber \\
&+ \sum_{j=i+1}^{N-1}(-\Delta(t_j\rightarrow t_{j+1}|\boldsymbol{z}_j)+\Delta(t_j\rightarrow t_{j-1}|\boldsymbol{z}_j))\mathrm{mod}(j-i,2).
\label{equ:BDIA_fb_int}
\end{align}
\label{prop:BDIA_fb_int_ave}
\end{proposition}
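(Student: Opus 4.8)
The plan is to prove the identity by strong downward induction on $i$, after rewriting the update (\ref{equ:BDIA_i_ave}) as a second-order linear recursion in the diffusion states. First I would abbreviate $\boldsymbol{F}_j:=\Delta(t_j\rightarrow t_{j-1}|\boldsymbol{z}_j)$ (one forward DDIM step) and $\boldsymbol{B}_j:=\Delta(t_{j-1}\rightarrow t_j|\boldsymbol{z}_{j-1})$ (one backward DDIM step), so that $\Delta(t_i\rightarrow t_{i+1}|\boldsymbol{z}_i)=\boldsymbol{B}_{i+1}$ and $\Delta(t_i\rightarrow t_{i-1}|\boldsymbol{z}_i)=\boldsymbol{F}_i$; with this notation (\ref{equ:BDIA_i_ave}) becomes, for $i\leq N-1$,
\begin{align}
\boldsymbol{z}_{i-1}=\gamma\,\boldsymbol{z}_{i+1}+(1-\gamma)\,\boldsymbol{z}_{i}-\gamma\,\boldsymbol{B}_{i+1}+\boldsymbol{F}_{i},
\label{equ:planrec}
\end{align}
while (\ref{equ:BDIA_init}) reads $\boldsymbol{z}_{N-1}=\boldsymbol{z}_N+\boldsymbol{F}_N$. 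Writing $p_m:=\frac{1-(-\gamma)^m}{1+\gamma}$ and $q_m:=\frac{\gamma+(-\gamma)^m}{1+\gamma}$, the target (\ref{equ:BDIA_fb_int_ave0})--(\ref{equ:BDIA_fb_int_ave}) is the statement $P(i)$: $\boldsymbol{z}_i=\boldsymbol{z}_N+\sum_{j=i+2}^{N}(p_{j-i}\boldsymbol{F}_j-q_{j-i}\boldsymbol{B}_j)+\boldsymbol{F}_{i+1}$. Since $p_1=1$ and $q_1=0$, this is equivalently the uniform expression $\boldsymbol{z}_i=\boldsymbol{z}_N+\sum_{m=1}^{N-i}(p_m\boldsymbol{F}_{i+m}-q_m\boldsymbol{B}_{i+m})$, which at $i=N-1$ collapses to $\boldsymbol{z}_{N-1}=\boldsymbol{z}_N+\boldsymbol{F}_N$, i.e.\ exactly (\ref{equ:BDIA_init}).

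For the base cases I would use $P(N-1)$ (just observed) and $P(N-2)$; the latter comes from substituting $\boldsymbol{z}_{N-1}=\boldsymbol{z}_N+\boldsymbol{F}_N$ into (\ref{equ:planrec}) at $i=N-1$, giving $\boldsymbol{z}_{N-2}=\boldsymbol{z}_N+(1-\gamma)\boldsymbol{F}_N-\gamma\boldsymbol{B}_N+\boldsymbol{F}_{N-1}$, which agrees with $P(N-2)$ because $p_2=1-\gamma$ and $q_2=\gamma$. For the inductive step I would assume $P(i+1)$ and $P(i)$ (both legitimate for $i\leq N-2$), plug them into (\ref{equ:planrec}), and collect terms: the coefficient of $\boldsymbol{z}_N$ is $\gamma+(1-\gamma)=1$; the explicit $+\boldsymbol{F}_i$ fills the lowest forward slot of $P(i-1)$ (coefficient $p_1=1$); the explicit $-\gamma\boldsymbol{B}_{i+1}$ fills the lowest nonzero backward slot (coefficient $q_2=\gamma$); and for every remaining index the coefficient of $\boldsymbol{F}_j$ is $\gamma p_{j-i-1}+(1-\gamma)p_{j-i}$ and that of $\boldsymbol{B}_j$ is $-\gamma q_{j-i-1}-(1-\gamma)q_{j-i}$. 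The argument then hinges on the scalar identities
\begin{align}
\gamma\,p_{m-1}+(1-\gamma)\,p_{m}=p_{m+1},\qquad \gamma\,q_{m-1}+(1-\gamma)\,q_{m}=q_{m+1},
\label{equ:planscalar}
\end{align}
which are immediate from $\gamma(-\gamma)^{m-1}=-(-\gamma)^m$ and $\gamma+(1-\gamma)=1$; together with $j-i=(j-(i-1))-1$ and $j-i-1=(j-(i-1))-2$ they yield exactly the coefficients $p_{j-(i-1)}$ and $q_{j-(i-1)}$ demanded by $P(i-1)$, completing the induction.

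I expect the main obstacle to be index bookkeeping rather than anything analytic: the sums in $P(i+1)$, $P(i)$, $P(i-1)$ start at $j=i+3$, $j=i+2$, $j=i+1$ respectively, so the boundary slots $j=i+1$ and $j=i+2$ must be matched carefully against the standalone $\Delta(t_{\cdot+1}\rightarrow t_\cdot|\boldsymbol{z}_{\cdot+1})$ summand and the explicit $-\gamma\boldsymbol{B}_{i+1}$ term, using the small values $p_1=1$, $q_1=0$, $p_2=1-\gamma$, $q_2=\gamma$, and one must check that these merge seamlessly with the geometric pattern forced by (\ref{equ:planscalar}). Finally, the $\gamma=1$ case (\ref{equ:BDIA_fb_int}) is just the specialization: at $\gamma=1$ we get $p_m=\frac{1-(-1)^m}{2}=\mathrm{mod}(m,2)$ and $q_m=\frac{1+(-1)^m}{2}=\mathrm{mod}(m+1,2)$, and after re-indexing the backward sum by $j\mapsto j+1$ (so that $\boldsymbol{B}_{j+1}=\Delta(t_j\rightarrow t_{j+1}|\boldsymbol{z}_j)$) and pulling the $j=N$ forward term back into the running sum, one recovers precisely the $\mathrm{mod}(\cdot,2)$-weighted form displayed in (\ref{equ:BDIA_fb_int}).
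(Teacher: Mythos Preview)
Your proposal is correct and follows essentially the same route as the paper: downward induction on $i$, with base cases $\boldsymbol{z}_{N-1}$ and $\boldsymbol{z}_{N-2}$, and an inductive step that substitutes the two preceding representations into the recursion (\ref{equ:BDIA_i_ave}) and matches coefficients. Your abbreviations $p_m,q_m$ and the scalar recurrences $\gamma p_{m-1}+(1-\gamma)p_m=p_{m+1}$ (and similarly for $q$) are a cleaner packaging of exactly the coefficient algebra the paper carries out term by term, and your handling of the $\gamma=1$ specialization via $p_m=\mathrm{mod}(m,2)$, $q_m=\mathrm{mod}(m{+}1,2)$ is correct.
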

\begin{proof}
See Appendix~\ref{appendix:BDIA_DDIM_ave_proof} for proof. 
\end{proof}

\begin{figure*}[t!]
\centering
\includegraphics[width=120mm]{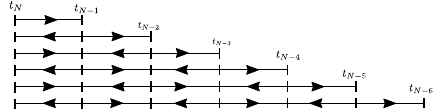}
\vspace*{-0.0cm}
\caption{\footnotesize{Schematic illustration of the BDIA-DDIM integration formulation for the special case of $\gamma=1$. Right and left arrows denote forward and backward updates, respectively. } }
\label{fig:BDIA_demo}
\vspace*{-0.3cm}
\end{figure*}

We now investigate (\ref{equ:BDIA_fb_int_ave0})-(\ref{equ:BDIA_fb_int}). It is clear from (\ref{equ:BDIA_fb_int_ave0}) that for each time-slot $[t_j, t_{j-1}]$, the summation of the two coefficients $(1-(-\gamma)^{j-i})/(1+\gamma)$ and $(\gamma+(-\gamma)^{j-i})/(1+\gamma)$ is equal to 1. This implies that the usage of BDIA-DDIM indeed leads to the averaged forward and backward DDIM updates per time-slot as we planned earlier in (\ref{equ:ddim_fb}). For the special case of $\gamma=1$, the update expression (\ref{equ:BDIA_fb_int}) becomes much simple.  Fig.~\ref{fig:BDIA_demo}  demonstrates when $\gamma=1$, how the entire integration $\int_{t_N}^{t_{i}}\boldsymbol{d}(\boldsymbol{z}_{\tau},\tau)d\tau$ for different $\boldsymbol{z}_{i}$  is approximated. It can be seen from the figure that the directions of the integration approximation for neighbouring time-slots are always opposite. In other words, the forward and backward DDIM updates are interlaced over the set of time-slots $\{(t_{j}, t_{j-1})\}_{j=N}^{i-1}$ for each $\boldsymbol{z}_{i}$. 


Next, we briefly discuss the averaging operations in EDICT and the proposed BDIA technique. It is seen from (\ref{equ:EDICT_r3})-(\ref{equ:EDICT_r4}) that EDICT performs averaging via the parameter $p$ over the two paired diffusion states $\boldsymbol{z}$ and $\boldsymbol{y}$, both of which are computed via forward DDIM updates. On the other hand, BDIA proposes to average via the parameter $\gamma$ over the DDIM forward and backward updates.  See Fig.~\ref{fig:BDIA_gamma_effect} for how the averaging operations affect the performance of EDICT and BDIA-DDIM for (round-trip) image editing.  

\begin{figure*}[t!]
\centering
\includegraphics[width=130mm]{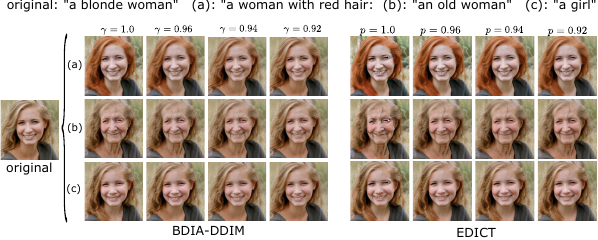}
\vspace*{-0.25cm}
\caption{\footnotesize{Demonstration of the impact of $\gamma$ and $p$ values in (round-trip) image-editing performance of BDIA-DDIM and EDICT, respectively. The number of timesteps was set to $40$ in both methods. The plots for BDIA-DDIM indicate that as $\gamma$ decreases from 1 to 0.92, the edited images tend to be visually closer to the original image. That is, the $\gamma$ parameter in BDIA-DDIM provides one more degree of freedom to allow for flexible image-editing than DDIM. 
The figure also indicates that the performance of EDICT with $p=1$ has noticeable distortions while BDIA-DDIM with $\gamma=1$ produces better image quality, and at the same time, only consumes half of the NFEs needed by EDICT. The recommended setup for $p$ in EDICT is $p=0.93$. } }
\label{fig:BDIA_gamma_effect}
\vspace*{-0.3cm}
\end{figure*}

\noindent \textbf{BDIA-DDIM inversion}: Whereas the conventional DDIM inversion \eqref{equ:DDIM_inv2} requires the approximation $\boldsymbol{z}_{i-1} \approx \boldsymbol{z}_i$, which is only true in the limit of infinite steps,
the formulation \eqref{equ:BDIA_i_ave} allows exact inversion (up to floating point error). That is, it follows from (\ref{equ:BDIA_i_ave}) that the diffusion state $\boldsymbol{z}_{i+1}$ can be computed in terms of $(\boldsymbol{z}_i, \boldsymbol{z}_{i-1})$ as 
\begin{align}
\boldsymbol{z}_{i+1} 
& =
\boldsymbol{z}_{i-1}/\gamma - \Big[a_{i}\boldsymbol{z}_{i} \hspace{-0.7mm}+\hspace{-0.7mm} b_{i}\hat{\boldsymbol{\epsilon}}_{\boldsymbol{\theta}}(\boldsymbol{z}_{i}, i)\Big]/\gamma+ \left(\frac{\boldsymbol{z}_{i}}{a_{i+1}} \hspace{-0.7mm}-\hspace{-0.7mm}\frac{b_{i+1}}{a_{i+1}}\hat{\boldsymbol{\epsilon}}_{\boldsymbol{\theta}}(\boldsymbol{z}_{i},i)\right). \label{equ:BDIA_i_inv}
\end{align}
Similarly to the computation (\ref{equ:BDIA_i_inv}), EDICT also does not involve any approximation and results in exact diffusion inversion. 
However, in contrast to EDICT, (\ref{equ:BDIA_i_inv}) does not require a doubling of the NFE.


For the special case of $\gamma=1$, (\ref{equ:BDIA_i}) or (\ref{equ:BDIA_i_inv}) is symmetric in time. That is, switching the timestep $t_{i+1}$ and $t_{i-1}$ in (\ref{equ:BDIA_i}) or (\ref{equ:BDIA_i_inv}) with $\gamma=1$ inverts the diffusion direction. We summarize the above property of time-symmetry in a lemma below:

\begin{proposition}[time-symmetry]
Switching the timestep $t_{i-1}$ and $t_{i+1}$ in (\ref{equ:BDIA_i}) produces the reverse update (\ref{equ:BDIA_i_inv}) under the setup $\gamma=1$, and vice versa.
\end{proposition}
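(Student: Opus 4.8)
The plan is to establish the claim by direct verification: I will rewrite \eqref{equ:BDIA_i} so that its dependence on the two outer timesteps $t_{i-1},t_{i+1}$ is made fully explicit, perform the interchange, and simplify. The structural fact that makes this work is that the forward increment $\Delta(t_i\rightarrow t_{i-1}\mid\boldsymbol{z}_i)$ and the backward increment $\Delta(t_i\rightarrow t_{i+1}\mid\boldsymbol{z}_i)$ in \eqref{equ:BDIA_i} are built from the \emph{same} noise estimate $\hat{\boldsymbol{\epsilon}}_{\boldsymbol{\theta}}(\boldsymbol{z}_i,i)$ evaluated at the common middle state $\boldsymbol{z}_i$; the two increments differ only through the step coefficients, which are functions of the respective endpoint noise levels $(\alpha,\sigma)$. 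Concretely, by \eqref{equ:ddim_f} and \eqref{equ:ddim_b} each increment has the common form $\Delta(t_i\rightarrow s\mid\boldsymbol{z}_i)=A(t_i,s)\,\boldsymbol{z}_i+B(t_i,s)\,\hat{\boldsymbol{\epsilon}}_{\boldsymbol{\theta}}(\boldsymbol{z}_i,i)-\boldsymbol{z}_i$ with $A(t_i,s)=\alpha_s/\alpha_{t_i}$ and $B(t_i,s)=\sigma_s-\sigma_{t_i}\alpha_s/\alpha_{t_i}$, for $s\in\{t_{i-1},t_{i+1}\}$; in particular $A(t_i,t_{i-1})=a_i$, $B(t_i,t_{i-1})=b_i$, $A(t_i,t_{i+1})=1/a_{i+1}$, $B(t_i,t_{i+1})=-b_{i+1}/a_{i+1}$.

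First I would make precise the operation ``switching the timesteps $t_{i-1}$ and $t_{i+1}$'': it is the relabeling $t_{i-1}\leftrightarrow t_{i+1}$, which also interchanges the states attached to those times, $\boldsymbol{z}_{i-1}\leftrightarrow\boldsymbol{z}_{i+1}$, while leaving $t_i$, $\boldsymbol{z}_i$ and $\hat{\boldsymbol{\epsilon}}_{\boldsymbol{\theta}}(\boldsymbol{z}_i,i)$ fixed. By the common form of the increments above, this relabeling sends $\Delta(t_i\rightarrow t_{i-1}\mid\boldsymbol{z}_i)\mapsto\Delta(t_i\rightarrow t_{i+1}\mid\boldsymbol{z}_i)$ and conversely. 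Applying it to \eqref{equ:BDIA_i} (with $\gamma=1$) therefore yields $\boldsymbol{z}_{i+1}=\boldsymbol{z}_{i-1}-\Delta(t_i\rightarrow t_{i-1}\mid\boldsymbol{z}_i)+\Delta(t_i\rightarrow t_{i+1}\mid\boldsymbol{z}_i)$. Next I would substitute the explicit expressions $\Delta(t_i\rightarrow t_{i-1}\mid\boldsymbol{z}_i)=a_i\boldsymbol{z}_i+b_i\hat{\boldsymbol{\epsilon}}_{\boldsymbol{\theta}}(\boldsymbol{z}_i,i)-\boldsymbol{z}_i$ and $\Delta(t_i\rightarrow t_{i+1}\mid\boldsymbol{z}_i)=\boldsymbol{z}_i/a_{i+1}-(b_{i+1}/a_{i+1})\hat{\boldsymbol{\epsilon}}_{\boldsymbol{\theta}}(\boldsymbol{z}_i,i)-\boldsymbol{z}_i$; the two stray terms $+\boldsymbol{z}_i$ (coming from $-(-\boldsymbol{z}_i)$ in the first increment) and $-\boldsymbol{z}_i$ (from the second) cancel, leaving $\boldsymbol{z}_{i+1}=\boldsymbol{z}_{i-1}-[a_i\boldsymbol{z}_i+b_i\hat{\boldsymbol{\epsilon}}_{\boldsymbol{\theta}}(\boldsymbol{z}_i,i)]+[\boldsymbol{z}_i/a_{i+1}-(b_{i+1}/a_{i+1})\hat{\boldsymbol{\epsilon}}_{\boldsymbol{\theta}}(\boldsymbol{z}_i,i)]$, which is exactly \eqref{equ:BDIA_i_inv} specialized to $\gamma=1$. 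For the converse, I would run the same steps in reverse: starting from \eqref{equ:BDIA_i_inv} with $\gamma=1$, re-express its bracketed terms as $\Delta(t_i\rightarrow t_{i-1}\mid\boldsymbol{z}_i)+\boldsymbol{z}_i$ and $\Delta(t_i\rightarrow t_{i+1}\mid\boldsymbol{z}_i)+\boldsymbol{z}_i$ (the $\pm\boldsymbol{z}_i$ reappearing and cancelling), apply the relabeling $t_{i-1}\leftrightarrow t_{i+1}$, and recover \eqref{equ:BDIA_i}.

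There is no substantive obstacle here; the statement is essentially a consequence of the structure of \eqref{equ:BDIA_i_ave} at $\gamma=1$, where the backward-refinement term that interpolates the earlier approximation degenerates exactly to the bare backward DDIM step. The one point that requires care is the bookkeeping of the step coefficients under the swap: $a_i,b_i$ and $a_{i+1},b_{i+1}$ are shorthands for functions of $(\alpha,\sigma)$ at the relevant timesteps, and one must check that interchanging $t_{i-1}$ and $t_{i+1}$ sends the forward-step coefficients $(a_i,\,b_i)=(\alpha_{i-1}/\alpha_i,\ \sigma_{i-1}-\sigma_i\alpha_{i-1}/\alpha_i)$ precisely to the backward-step coefficients $(1/a_{i+1},\,-b_{i+1}/a_{i+1})=(\alpha_{i+1}/\alpha_i,\ \sigma_{i+1}-\sigma_i\alpha_{i+1}/\alpha_i)$ and vice versa --- which is immediate from the formulas for $A(t_i,s),B(t_i,s)$ above --- so that the operation ``interchange $t_{i-1}$ and $t_{i+1}$'' genuinely coincides with ``interchange the forward and backward increments.'' Once this identification is in place, the cancellation of the $\pm\boldsymbol{z}_i$ terms closes both directions.
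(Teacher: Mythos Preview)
Your proposal is correct. The paper states this proposition without proof, treating it as immediate from the form of \eqref{equ:BDIA_i}; your direct verification---making the endpoint dependence explicit via the common increment form $\Delta(t_i\rightarrow s\mid\boldsymbol{z}_i)=(\alpha_s/\alpha_i)\boldsymbol{z}_i+(\sigma_s-\sigma_i\alpha_s/\alpha_i)\hat{\boldsymbol{\epsilon}}_{\boldsymbol{\theta}}(\boldsymbol{z}_i,i)-\boldsymbol{z}_i$, swapping $t_{i-1}\leftrightarrow t_{i+1}$, and cancelling the $\pm\boldsymbol{z}_i$ terms---is exactly the routine check the paper leaves implicit.
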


\begin{remark}
We have also designed coupled BDIA (CBDIA) as an extension of EDICT. CBDIA is also invertible and needs to perform two NFEs per timestep in general. It is found that CBDIA includes BDIA with $\gamma=1$ as a special case. See Appendix~\ref{appendix:CBDIA} for details.  
\end{remark}

\section{BDIA for EDM sampling procedure}
In this section, we explain how to apply our new technique BDIA to the EDM sampling procedure \cite{Karras22EDM}. We emphasize that the new method BDIA-EDM is designed to improve the sampling quality instead of enabling inversion of EDM. 

In brief, the recent work \cite{Karras22EDM} reparameterizes the forward diffusion process (\ref{equ:forwardGaussian}) to be
\begin{align}
q_{t|0}(\boldsymbol{z}_t|\boldsymbol{x}) = \mathcal{N}(\boldsymbol{z}_t|\alpha_t\boldsymbol{x}, \alpha_t^2\tilde{\sigma}_t^2\boldsymbol{I}),
\end{align}
where $\sigma_t$ of (\ref{equ:forwardGaussian}) is represented as $\sigma_t=\alpha_t\tilde{\sigma}_t$. 
The EDM sampling procedure is then designed in \cite{Karras22EDM} to solve the corresponding ODE using improved Euler method. In particular, the diffusion state $\boldsymbol{z}_{i-1}$ at timestep $t_{i-1}$ given $\boldsymbol{z}_i$ is computed as
\begin{align}
\tilde{\boldsymbol{z}}_{i-1} &= \textcolor{blue}{\boldsymbol{z}_i} + (t_{i-1}-t_i) \boldsymbol{d}_i \label{equ:ImpEuler1}  \\
\boldsymbol{z}_{i-1} &= \textcolor{blue}{\boldsymbol{z}_i} + (t_{i-1}-t_i)(\frac{1}{2}\boldsymbol{d}_i+\frac{1}{2}\boldsymbol{d}_{i-1|i}') 
 \label{equ:ImpEuler2}
\end{align}
where $\boldsymbol{d}_i=\boldsymbol{d}(\boldsymbol{z}_i,t_i)$ and $\boldsymbol{d}_{i-1|i}'=\boldsymbol{d}(\tilde{\boldsymbol{z}}_{i-1},t_{i-1})$. $\tilde{\boldsymbol{z}}_{i-1}$ is the intermediate estimate of the diffusion state $\boldsymbol{z}$ at time $t_{i-1}$. $\boldsymbol{z}_{i-1}$ is computed by utilizing the average of the two gradients $\boldsymbol{d}_i$ and $\boldsymbol{d}_{i-1|i}'$.  

Next we consider refining the estimate for $\boldsymbol{z}_i$ (highlighted in \textcolor{blue}{blue} in (\ref{equ:ImpEuler1})-(\ref{equ:ImpEuler2})) via BDIA when computing $\boldsymbol{z}_{i-1}$. We observe that in the iterative updates of EDM, both $\boldsymbol{d}_{i|i+1}'=\boldsymbol{d}(\tilde{\boldsymbol{z}}_{i},t_i)$ and $\boldsymbol{d}_i=\boldsymbol{d}(\boldsymbol{z}_i,t_i)$ are evaluated at the same timepoint $t_i$. Specifically, $\boldsymbol{d}_{i|i+1}'$ is for approximating $\int_{t_{i+1}}^{t_{i}}\boldsymbol{d}(\boldsymbol{z}_{\tau},\tau)d\tau$ over the previous timeslot $[t_{i+1}, t_i]$ while $\boldsymbol{d}_{i}$ is for approximating $\int_{t_{i}}^{t_{i-1}}\boldsymbol{d}(\boldsymbol{z}_{\tau},\tau)d\tau$ over the current timeslot $[t_i, t_{i-1}]$. It is thus natural to reuse $\boldsymbol{d}_{i}$ for better approximating the previous integration $\int_{t_{i+1}}^{t_{i}}\boldsymbol{d}(\boldsymbol{z}_{\tau},\tau)d\tau$. Similarly to BDIA-DDIM, the new estimate $\hat{\boldsymbol{z}}_i$ is computed as
\begin{align}
\hat{\boldsymbol{z}}_i = \boldsymbol{z}_{i+1} + (1-\gamma) (\boldsymbol{z}_i-\boldsymbol{z}_{i+1})+\gamma(t_i-t_{i+1})(\frac{1}{2}\boldsymbol{d}_{i+1}+\frac{1}{2}\boldsymbol{d}_{i}),
\end{align}
where $\gamma\in [0,1]$. Once $\hat{\boldsymbol{z}}_i$ is obtained, it is then employed to replace $\boldsymbol{z}_i$ in \textcolor{blue}{blue} in (\ref{equ:ImpEuler1})-(\ref{equ:ImpEuler2}). We summarize the update procedure of BDIA-EDM in Alg.~\ref{alg:BDIAEDM}. 

\begin{algorithm}[t!]
   \caption{\small BDIA-EDM to improve sampling quality }
   \label{alg:BDIAEDM}
\begin{algorithmic}[1]
\STATE {\small {\bfseries Input:}  \hspace{-1.5mm}  $\begin{array}{l} \textrm{number of time steps } N,\end{array}$ $\gamma\in [0,1]$ }
   \STATE {\small {\bfseries Sample} $\boldsymbol{z}_N\sim\mathcal{N}(\boldsymbol{0}, \alpha_{t_N}^2\tilde{\sigma}_{t_N}^2\boldsymbol{I})$  }
   \FOR{\small $i\in\{N, N{-}1, \ldots, 1\}$}
   \STATE \hspace{-0mm}{\small $\boldsymbol{d}_i=\boldsymbol{d}(\boldsymbol{z}_i,t_i)$} 
   \IF{$i<N$} 
   \STATE $\hat{\boldsymbol{z}}_i \!=\! \boldsymbol{z}_{i+1} +(1-\gamma)(\boldsymbol{z}_i \!-\!\boldsymbol{z}_{i+1})  \!+\! \gamma(t_{i}-t_{i+1})(\frac{1}{2}\boldsymbol{d}_{i+1}\!+\!\frac{1}{2}\boldsymbol{d}_i) $ \textcolor{blue}{[averaged integration approx.]}  
   \ELSE  
   \STATE $\hat{\boldsymbol{z}}_i=\boldsymbol{z}_i$
   \ENDIF
   \STATE $\tilde{\boldsymbol{z}}_{i-1} \leftarrow \hat{\boldsymbol{z}}_i + (t_{i-1}-t_i)\boldsymbol{d}_i$
   \IF{ $\sigma_{t_{i-1}}\neq 0$ }
   \STATE 
   $\boldsymbol{d}_{i-1|i}'=\boldsymbol{d}(\tilde{\boldsymbol{z}}_{i-1},t_{i-1})$
   \STATE $\boldsymbol{z}_{i-1}\leftarrow \hat{\boldsymbol{z}}_i + (t_{i-1}-t_i)\left(\frac{1}{2}\boldsymbol{d}_{i} +\frac{1}{2}\boldsymbol{d}_{i-1|i}'\right)$  
   \ENDIF
   \ENDFOR 
   \STATE {\bfseries Output:} {\small $\boldsymbol{z}_{0}$  }\vspace{1mm}
\end{algorithmic}
\end{algorithm}

\begin{remark}
Inspired by the design of BDIA-EDM, we have also proposed BDIA-DPM-Solver++ as an extension of DPM-Solver++. See Appendix~\ref{appendix:BDIA-DPM-Solver} for details.   
\end{remark}

\section{Related Work}
\label{sec:related_works}

In the numerical integration literature, there is a branch of research on development of time-reversible ODE solvers. For instance, Verlet integration is a time-reversible method for solving  2nd-order ODEs \cite{Verlet67VerletInt}. Leapfrog integration is another time-reversible method also developed for solving 2nd-order ODEs \cite{Skeel93leapfrog}.

\section{Experiments}
\label{sec:exp}

We conducted two types of experiments: (1) evaluation of image sampling for both BDIA-DDIM and BDIA-EDM; (2) image-editing via BDIA-DDIM. It was found that our new technique BDIA produces promising results for both tasks.

\subsection{Evaluation of image sampling}
\label{subsec:exp_imgSampling}
\noindent \textbf{Text-to-image generation:} In this task, we tested three sampling methods by using  StableDiffusion V2\footnote{\url{https://github.com/Stability-AI/stablediffusion}}, which are DDIM, DPM-Solver, and BDIA-DDIM. The parameter $\gamma$ in BDIA-DDIM was set to $\gamma=0.5$. COCO2014 validation set was utlized for this task. For each method, 20K images of size $512\times 512$ were generated by feeding 20K text prompts the diffusion model. All the three methods share the same seed of the random noise generator and the same text prompts. The FID score for each method was computed by resizing the generated images to size of $256\times 256$ due to the fact that the original images in COCO2014 are in general have lower resolution than the size of $512\times 512$.

\begin{table}[h!]
\caption{ Comparison of three methods for the task of text-to-image generation with 10 timesteps  \vspace{-2mm} } 
\vspace*{0.2cm}
\label{tab:t2i_FID_compare}
\centering
\begin{tabular}{|c|c|c|c|}
\hline
 { sampling methods } & \hspace{0mm} {{ BDIA-DDIM }}\hspace{0mm}
&   \hspace{0mm} {{ DDIM }}\hspace{0mm}  
&   \hspace{0mm} {{ DPM-Solver }}\hspace{0mm}  
\\ \hline
{FID}  & \textbf{{12.62}} & {15.04} &  {16.06} 
\\ \hline
\end{tabular}
\end{table}

It is clear from Table~\ref{tab:t2i_FID_compare} that BDIA-DDIM has a big FID performance gain than the other two methods. This indicates that the introduced backward DDIM update per timestep helps to improve the accuracy of the coresponding integration approximation. The visual improvement of BDIA-DDIM over DDIM is also demonstrated in Fig.~\ref{fig:BDIADDIM_t2i}. 

\noindent \textbf{Classical image generation:} In this experiment, we consider the task of noise-to-image generation without input texts. The parameter $\gamma$ in BDIA-DDIM and BDIA-EDM was set to $\gamma=1.0$.  The tested pre-trained models can be found in Appendix~\ref{appendix:pre_trained_models}. Given a pre-trained model, 50K artificial images were generated for a particular NFE, and the corresponding FID score was computed.   

Table~\ref{tab:BDIA_DDIM} and \ref{tab:BDIA_EDM} summarize the computed FID scores. It is clear that by incorporating BDIA into both DDIM and EDM, the FID scores are improved consierably. This can be explained by the fact that BDIA introduces the additional backward integration approximation per time-step in the sampling process. This makes the resulting final integration approximation more accurate.

\begin{table}[h!]
\caption{ FID comparison for unconditional sampling.  } 
\vspace*{-0.2cm}
\label{tab:BDIA_DDIM}
\centering
\begin{tabular}{|c|c|c|c||c|c|c|}
\hline
\hspace{-2.5mm} {\scriptsize  timesteps }
\hspace{-2.5mm} & \hspace{-1.5mm}  
\hspace{-2.5mm} & \hspace{-2.5mm} {{\scriptsize DDIM}}
\hspace{-2.5mm} &   \hspace{-2.5mm} 
{{\scriptsize  $\begin{array}{c}\textrm{BDIA-DDIM}  \end{array}$}}
\hspace{-2.5mm} & \hspace{-1.5mm} 
\hspace{-2.5mm} & \hspace{-2.5mm} 
{{\scriptsize DDIM}}
\hspace{-2.5mm} & \hspace{-3mm} 
{{\scriptsize $\begin{array}{c}\textrm{BDIA-DDIM} \end{array}$}}
\hspace{-3mm}   
\\ \hline
\hspace{-2.5mm}
{\scriptsize  10}  
\hspace{-2.5mm} & \hspace{-1.5mm} 
\multirow{3}{*}{\rotatebox{90}{\scriptsize CIFAR10}}  
\hspace{-1.5mm} & \hspace{-2.5mm} 
{\scriptsize  14.38} 
\hspace{-2.5mm} & \hspace{-2.5mm}
{\scriptsize \textbf{10.03}} 
\hspace{-2.5mm} & \hspace{-1.5mm}
\multirow{3}{*}{\rotatebox{90}{\scriptsize Celeba}}
\hspace{-1.5mm} & \hspace{-2.5mm}
{\scriptsize  13.41 } 
\hspace{-2.5mm}  & \hspace{-3mm}
{\scriptsize  \textbf{10.86} } \hspace{-3mm}
\\ \cline{1-1}  \cline{3-4}  \cline{6-7}
\hspace{-2.5mm}
{\scriptsize  20} 
\hspace{-2.5mm} & \hspace{-1.5mm}
\hspace{-1.5mm} & \hspace{-2.5mm} 
{\scriptsize  7.51} 
\hspace{-2.5mm} & \hspace{-2.5mm}
{\scriptsize 
 \textbf{6.29}} 
 \hspace{-2.5mm} & \hspace{-1.5mm} 
 \hspace{-1.5mm} & \hspace{-2.5mm} 
 {\scriptsize  9.45} 
 \hspace{-2.5mm}  & \hspace{-3mm}
 {\scriptsize  \textbf{8.86}} 
 \hspace{-3mm}
\\ \cline{1-1} \cline{3-4}  \cline{6-7}
\hspace{-2.5mm}
{\scriptsize  40}  
\hspace{-2.5mm} & \hspace{-1.5mm}
\hspace{-1.5mm} & \hspace{-2.5mm} {\scriptsize  4.95} 
\hspace{-2.5mm} & \hspace{-2.5mm}
{\scriptsize 
 \textbf{4.63}}
\hspace{-2.5mm} & \hspace{-1.5mm}
\hspace{-1.5mm} & \hspace{-2.5mm}
 {\scriptsize  6.93 } 
 \hspace{-2.5mm} & \hspace{-3mm}  {\scriptsize  \textbf{6.50} } \hspace{-3mm}
 \\ \hline
\end{tabular}
\vspace{-3mm}
\end{table}

\begin{table}[h!]
\caption{ \footnotesize{FID comparison of EDM and BDIA-EDM. The reason for testing smaller NFEs for CIFAR10 than those for other datasets is because in \cite{Karras22EDM}, the optimal NFE for CIFAR10 is smaller than others.   }} 
\vspace*{-0.2cm}
\label{tab:BDIA_EDM}
\centering
\begin{tabular}{|c|c|c|c|c|}
\cline{2-5}
\multicolumn{1}{c|}{} & \hspace{0mm} {{\scriptsize CIFAR10 }}\hspace{0mm}
&   \hspace{0mm} {{\scriptsize FFHQ}}\hspace{0mm} 
&   \hspace{0mm} {{\scriptsize AFHQV2}}\hspace{0mm} 
&   \hspace{0mm} {{\scriptsize ImageNet64}}\hspace{0mm} 
\\ \hline
{\scriptsize NFEs} & {\scriptsize 35 }  & {\scriptsize 39} & {\scriptsize {39}}
& {\scriptsize {39}}
\\ \hline
{\scriptsize EDM} & {\scriptsize 1.85}  & {\scriptsize 2.64} & {\scriptsize 2.08} & {\scriptsize {2.51}}
\\ \hline
{\scriptsize BDIA-EDM} & {\scriptsize \textbf{1.79}}  & {\scriptsize \textbf{2.54}} & {\scriptsize \textbf{2.02}}
& {\scriptsize \textbf{2.38}}
\\ \hline \hline
{\scriptsize NFEs} & {\scriptsize 43 }  & {\scriptsize 47} & {\scriptsize {47}}
& {\scriptsize {47}}
\\ \hline
{\scriptsize EDM} & {\scriptsize 1.85}  & {\scriptsize 2.55} & {\scriptsize 2.06} & {\scriptsize {2.44}}
\\ \hline
{\scriptsize BDIA-EDM} & {\scriptsize \textbf{1.80}}  & {\scriptsize \textbf{2.46}} & {\scriptsize \textbf{2.01}}
& {\scriptsize \textbf{2.35}}
\\ \hline
\end{tabular}
\vspace*{-0.3cm}
\end{table}

\subsection{Evaluation of image-editing}
In this second experiment, we evaluated BDIA-DDIM for text-based  and ControlNet-based image-editing by utilizing the open-source repository of EDICT\footnote{\url{https://github.com/salesforce/EDICT}} and ControlNet. Fig.~\ref{fig:image_editing_controlnet} and\ref{fig:image_editing_mixing} visualize the obtained results. We point out that for text-based image editing, BDIA-DDIM produces comparable results to EDICT while reducing by approximately half the NFE compared to EDICT.


\begin{figure*}[h!]
\centering
\includegraphics[width=120mm]
{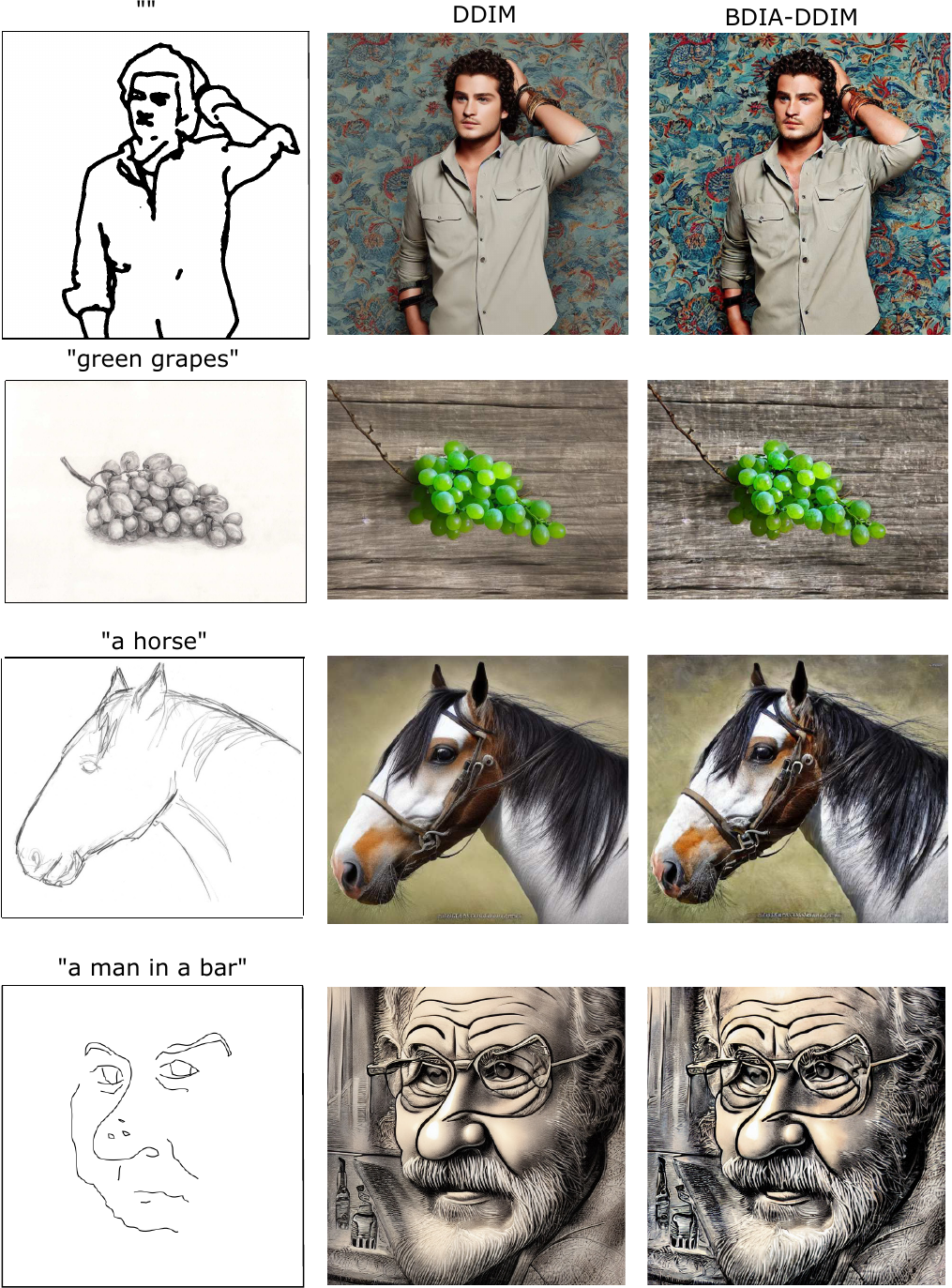}
\vspace*{-0.2cm}
\caption{ControlNet-based image editing by using BDIA-DDIM and DDIM at 10 timesteps.  The hyper-parameter $\gamma$ in BDIA-DDIM was set to $0.5$.   
}
\label{fig:image_editing_controlnet}
\vspace*{-0.0cm}
\end{figure*}
 
 \begin{figure*}[h!]
\centering
\includegraphics[width=120mm]{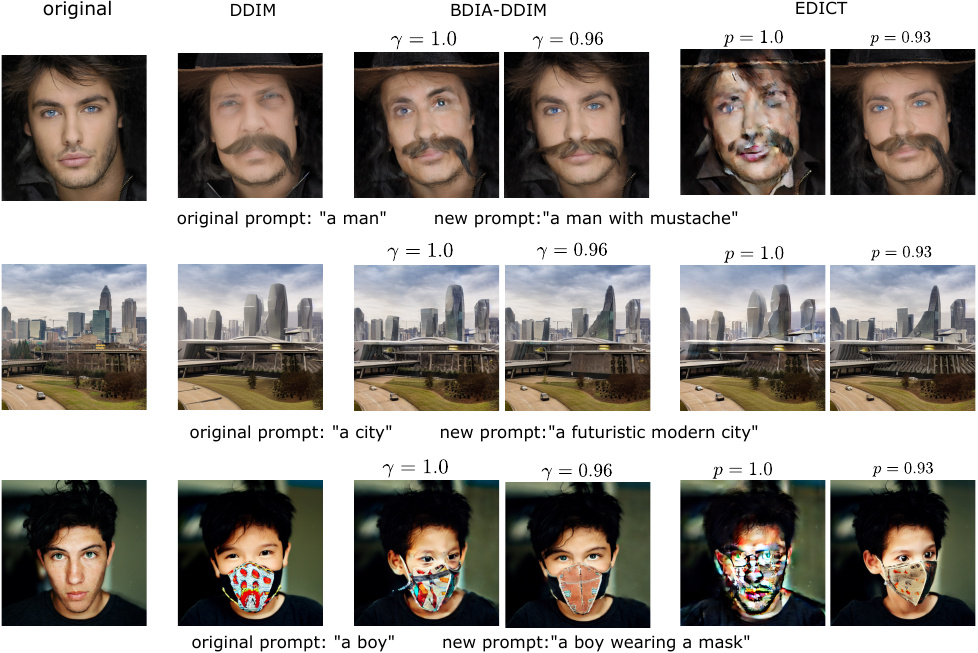}
\vspace*{-0.2cm}
\caption{Round-trip image editing by using BDIA-DDIM, EDICT, and DDIM. The value $p=0.93$ for EDICT is the recommended setup in \cite{Wallace23EDICT}. The number of timesteps was set to 40 and the classifier guidance scale was 4.0. The $\gamma$ parameter in BDIA-DDIM can be adjusted by the artist to produce different quality and effects. BDIA-DDIM only needs to perform half of NFEs needed in EDICT for each image generation.  
Please enlarge to see details.
}
\label{fig:image_editing_mixing}
\vspace*{-0.0cm}
\end{figure*}

\vspace{-2mm}
\section{Conclusions}
\vspace{-2mm}
In this paper, we have proposed a new technique BDIA, to assist DDIM in achieving exact diffusion inversion. The key step of BDIA-DDIM is to perform DDIM update procedure twice at each time step $t_i$: one over the previous time-slot $[t_i, t_{i+1}]$ and the other over the next time-slot $[t_i,t_{i-1}]$ in computing $\boldsymbol{z}_{i-1}$. By doing so, the expression for $\boldsymbol{z}_{i-1}$ becomes a linear combination of $(\boldsymbol{z}_i, \hat{\boldsymbol{\epsilon}}_{\boldsymbol{\theta}}(\boldsymbol{z}_i,i), \boldsymbol{z}_{i+1})$ that is symmetric in time. As a result, $\boldsymbol{z}_{i+1}$ can be computed exactly as a linear function of $(\boldsymbol{z}_i, \hat{\boldsymbol{\epsilon}}_{\boldsymbol{\theta}}(\boldsymbol{z}_i,i), \boldsymbol{z}_{i-1})$, enabling exact diffusion inversion. Note that although the DDIM update is evaluated twice at each step, this is inexpensive since the costly neural functional evaluation is performed only once.



\newpage

\appendix

\appendix
\onecolumn


\section{Proof for Proposition~\ref{prop:BDIA_fb_int_ave} }
\label{appendix:BDIA_DDIM_ave_proof}

We prove Proposition~\ref{prop:BDIA_fb_int_ave} by induction. To do so, we first compute the update expression for $\boldsymbol{z}_{N-2}$. It is known from (\ref{equ:BDIA_init}) that 
\begin{align}
\boldsymbol{z}_{N-1} &= \boldsymbol{z}_N + \Delta(t_{N}\rightarrow t_{N-1}|\boldsymbol{z}_{N}).  \nonumber
\end{align}
With the expression for $\boldsymbol{z}_{N-1}$, $\boldsymbol{z}_{N-2}$ can then be computed by using (\ref{equ:BDIA_i_ave}):
\begin{align}
\boldsymbol{z}_{N-2}&=\boldsymbol{z}_{N}-(1-\gamma)(\boldsymbol{z}_{N}-\boldsymbol{z}_{N-1})-\gamma\Delta(t_{N-1}\rightarrow t_{N}|\boldsymbol{z}_{N-1}) + \Delta(t_{N-1}\rightarrow t_{N-2}|\boldsymbol{z}_{N-1}) \nonumber\\
&=\boldsymbol{z}_{N}+(1-\gamma)\Delta(t_{N}\rightarrow t_{N-1}|\boldsymbol{z}_{N})-\gamma\Delta(t_{N-1}\rightarrow t_{N}|\boldsymbol{z}_{N-1}) + \Delta(t_{N-1}\rightarrow t_{N-2}|\boldsymbol{z}_{N-1}) \nonumber
\end{align}
which satisfies (\ref{equ:BDIA_fb_int_ave}). Furthermore, the difference $\boldsymbol{z}_{N-1}-\boldsymbol{z}_{N-2}$ can be represented as 
\begin{align}
\boldsymbol{z}_{N-1}-\boldsymbol{z}_{N-2}
= \gamma\Delta(t_{N}\rightarrow t_{N-1}|\boldsymbol{z}_{N})+\gamma\Delta(t_{N-1}\rightarrow t_{N}|\boldsymbol{z}_{N-1}) - \Delta(t_{N-1}\rightarrow t_{N-2}|\boldsymbol{z}_{N-1}) 
\label{equ:z_N_difference_proof}
\end{align}

Next, we show that the expression for $\boldsymbol{z}_{N-3}$ also takes the form of (\ref{equ:BDIA_fb_int_ave}). Again by using (\ref{equ:BDIA_i_ave}), $\boldsymbol{z}_{N-3}$ can be computed to be
\begin{align}
\boldsymbol{z}_{N-3}&=\boldsymbol{z}_{N-1}-(1-\gamma)(\boldsymbol{z}_{N-1}-\boldsymbol{z}_{N-2})-\gamma\Delta(t_{N-2}\rightarrow t_{N-1}|\boldsymbol{z}_{N-2}) + \Delta(t_{N-2}\rightarrow t_{N-3}|\boldsymbol{z}_{N-2}) \nonumber\\
&\stackrel{(a)}{=}\boldsymbol{z}_{N-1}-(1-\gamma)\Big[\gamma\Delta(t_{N}\rightarrow t_{N-1}|\boldsymbol{z}_{N})+\gamma\Delta(t_{N-1}\rightarrow t_{N}|\boldsymbol{z}_{N-1}) - \Delta(t_{N-1}\rightarrow t_{N-2}|\boldsymbol{z}_{N-1})\Big]\nonumber\\
&\hspace{3mm}-\gamma\Delta(t_{N-2}\rightarrow t_{N-1}|\boldsymbol{z}_{N-2}) + \Delta(t_{N-2}\rightarrow t_{N-3}|\boldsymbol{z}_{N-2}) \nonumber \\
&=\boldsymbol{z}_{N}+[1-(1-\gamma)\gamma]\Delta(t_{N}\rightarrow t_{N-1}|\boldsymbol{z}_{N})-(1-\gamma)\gamma\Delta(t_{N-1}\rightarrow t_{N}|\boldsymbol{z}_{N-1}) \nonumber\\
&\hspace{3mm}+(1-\gamma)\Delta(t_{N-1}\rightarrow t_{N-2}|\boldsymbol{z}_{N-1})-\gamma\Delta(t_{N-2}\rightarrow t_{N-1}|\boldsymbol{z}_{N-2}) \nonumber\\
&\hspace{3mm}+ \Delta(t_{N-2}\rightarrow t_{N-3}|\boldsymbol{z}_{N-2}) \nonumber \\
&=\boldsymbol{z}_{N}+\frac{1-(-\gamma)^3}{1+\gamma} \Delta(t_{N}\rightarrow t_{N-1}|\boldsymbol{z}_{N})-\frac{\gamma(1-\gamma^2)}{1+\gamma}\Delta(t_{N-1}\rightarrow t_{N}|\boldsymbol{z}_{N-1}) \nonumber\\
&\hspace{3mm}+(1-\gamma)\Delta(t_{N-1}\rightarrow t_{N-2}|\boldsymbol{z}_{N-1})-\gamma\Delta(t_{N-2}\rightarrow t_{N-1}|\boldsymbol{z}_{N-2}) \nonumber\\
&\hspace{3mm}+ \Delta(t_{N-2}\rightarrow t_{N-3}|\boldsymbol{z}_{N-2}) \nonumber \\
&= \boldsymbol{z}_{N}+\sum_{j=N-1}^{N} \Big[\frac{1-(-\gamma)^{j-i}}{1+\gamma}\Delta(t_{j}\rightarrow t_{j-1}|\boldsymbol{z}_{j}) - \frac{\gamma+(-\gamma)^{j-i}}{1+\gamma}\Delta(t_{j-1}\rightarrow t_{j}|\boldsymbol{z}_{j-1}))\Big] \nonumber \\
&\hspace{3mm}+\Delta(t_{N-2}\rightarrow t_{N-3}|\boldsymbol{z}_{N-2}). \nonumber
\end{align}
where step $(a)$ makes use of (\ref{equ:z_N_difference_proof}). It is clear that the above expression for $\boldsymbol{z}_{N-3}$ is identical to (\ref{equ:BDIA_fb_int_ave}) with $i=N-3$.


The final step is to first assume that $\boldsymbol{z}_k$ and $\boldsymbol{z}_{k+1}$  can be represented by (\ref{equ:BDIA_fb_int_ave}) with $i=k$ and $i=k+1$, respectively. That is,  
\begin{align}
\boldsymbol{z}_{k}&= \boldsymbol{z}_{N}+\sum_{j=k+2}^{N} \Big[\frac{1-(-\gamma)^{j-k}}{1+\gamma}\Delta(t_{j}\rightarrow t_{j-1}|\boldsymbol{z}_{j}) - \frac{\gamma+(-\gamma)^{j-k}}{1+\gamma}\Delta(t_{j-1}\rightarrow t_{j}|\boldsymbol{z}_{j-1}))\Big] \nonumber \\
&\hspace{3mm}+\Delta(t_{k+1}\rightarrow t_{k}|\boldsymbol{z}_{k+1}) \label{equ:z_k_proof} \\
\boldsymbol{z}_{k+1}&= \boldsymbol{z}_{N}+\sum_{j=k+3}^{N} \Big[\frac{1-(-\gamma)^{j-k-1}}{1+\gamma}\Delta(t_{j}\rightarrow t_{j-1}|\boldsymbol{z}_{j}) - \frac{\gamma+(-\gamma)^{j-k-1}}{1+\gamma}\Delta(t_{j-1}\rightarrow t_{j}|\boldsymbol{z}_{j-1}))\Big] \nonumber \\
&\hspace{3mm}+\Delta(t_{k+2}\rightarrow t_{k+1}|\boldsymbol{z}_{k+2}). 
\label{equ:z_kplus1_proof}
\end{align}
We will then show in the following that $\boldsymbol{z}_{k-1}$ is again identical to (\ref{equ:BDIA_fb_int_ave}) with $i=k-1$. 
From (\ref{equ:BDIA_i_ave}), $\boldsymbol{z}_{k-1}$ can be computed in terms of $(\boldsymbol{z}_{k},\boldsymbol{z}_{k+1})$ as 
\begin{align}
\boldsymbol{z}_{k-1} &= \boldsymbol{z}_{k+1}-(1-\gamma)(\boldsymbol{z}_{k+1}-\boldsymbol{z}_k) -\gamma\Delta(t_{k}\rightarrow t_{k+1}|\boldsymbol{z}_k) + \Delta(t_{k}\rightarrow t_{k-1}|\boldsymbol{z}_k) \nonumber \\
 &= \gamma\boldsymbol{z}_{k+1}+(1-\gamma)\boldsymbol{z}_k-\gamma\Delta(t_{k}\rightarrow t_{k+1}|\boldsymbol{z}_k) + \Delta(t_{k}\rightarrow t_{k-1}|\boldsymbol{z}_k) \label{equ:z_kminus1_proof} 
\end{align}

Plugging (\ref{equ:z_kplus1_proof}) and (\ref{equ:z_k_proof}) into (\ref{equ:z_kminus1_proof}) produces 
\begin{align} 
\boldsymbol{z}_{k-1} &= \gamma\boldsymbol{z}_{N}+\gamma\sum_{j=k+3}^{N} \Big[\frac{1-(-\gamma)^{j-k-1}}{1+\gamma}\Delta(t_{j}\rightarrow t_{j-1}|\boldsymbol{z}_{j}) - \frac{\gamma+(-\gamma)^{j-k-1}}{1+\gamma}\Delta(t_{j-1}\rightarrow t_{j}|\boldsymbol{z}_{j-1}))\Big] \nonumber \\
&\hspace{3mm}+\gamma\Delta(t_{k+2}\rightarrow t_{k+1}|\boldsymbol{z}_{k+2}) \nonumber\\
&+(1-\gamma)\boldsymbol{z}_{N}+(1-\gamma)\sum_{j=k+2}^{N} \Big[\frac{1-(-\gamma)^{j-k}}{1+\gamma}\Delta(t_{j}\rightarrow t_{j-1}|\boldsymbol{z}_{j}) - \frac{\gamma+(-\gamma)^{j-k}}{1+\gamma}\Delta(t_{j-1}\rightarrow t_{j}|\boldsymbol{z}_{j-1}))\Big] \nonumber \\
&\hspace{3mm}+(1-\gamma)\Delta(t_{k+1}\rightarrow t_{k}|\boldsymbol{z}_{k+1}) \nonumber\\
&\hspace{3mm}-\gamma\Delta(t_{k}\rightarrow t_{k+1}|\boldsymbol{z}_k) + \Delta(t_{k}\rightarrow t_{k-1}|\boldsymbol{z}_k) \nonumber \\
&= \boldsymbol{z}_{N}+\sum_{j=k+3}^{N} \Big[\frac{(\gamma+(-\gamma)^{j-k})+(1-\gamma)(1-(-\gamma)^{j-k})}{1+\gamma}\Delta(t_{j}\rightarrow t_{j-1}|\boldsymbol{z}_{j}) \nonumber\\
&\hspace{25mm}- \frac{\gamma^2-(-\gamma)^{j-k}+(1-\gamma)(\gamma+(-\gamma)^{j-k})}{1+\gamma}\Delta(t_{j-1}\rightarrow t_{j}|\boldsymbol{z}_{j-1}))\Big] \nonumber \\
&+ \Big[\frac{\gamma+\gamma^2+(1-\gamma)(1-(-\gamma)^{2})}{1+\gamma}\Delta(t_{k+2}\rightarrow t_{k+1}|\boldsymbol{z}_{k+2}) - \frac{(1-\gamma)(\gamma+(-\gamma)^{2})}{1+\gamma}\Delta(t_{k+1}\rightarrow t_{k+2}|\boldsymbol{z}_{k+1}))\Big] \nonumber \\
&\hspace{3mm}+(1-\gamma)\Delta(t_{k+1}\rightarrow t_{k}|\boldsymbol{z}_{k+1}) -\gamma\Delta(t_{k}\rightarrow t_{k+1}|\boldsymbol{z}_k) \nonumber\\
&\hspace{3mm} + \Delta(t_{k}\rightarrow t_{k-1}|\boldsymbol{z}_k) \nonumber \\
&= \boldsymbol{z}_{N}+\sum_{j=k+3}^{N} \Big[\frac{1-(-\gamma)^{j-k+1}}{1+\gamma}\Delta(t_{j}\rightarrow t_{j-1}|\boldsymbol{z}_{j}) - \frac{\gamma+(-\gamma)^{j-k+1}}{1+\gamma}\Delta(t_{j-1}\rightarrow t_{j}|\boldsymbol{z}_{j-1}))\Big] \nonumber \\
&+ \Big[\frac{1-(-\gamma)^3 }{1+\gamma}\Delta(t_{k+2}\rightarrow t_{k+1}|\boldsymbol{z}_{k+2}) - \frac{\gamma+(-\gamma)^3}{1+\gamma}\Delta(t_{k+1}\rightarrow t_{k+2}|\boldsymbol{z}_{k+1}))\Big] \nonumber \\
&\hspace{3mm}+(1-\gamma)\Delta(t_{k+1}\rightarrow t_{k}|\boldsymbol{z}_{k+1}) -\gamma\Delta(t_{k}\rightarrow t_{k+1}|\boldsymbol{z}_k) \nonumber\\
&\hspace{3mm} + \Delta(t_{k}\rightarrow t_{k-1}|\boldsymbol{z}_k) \nonumber \\
&= \boldsymbol{z}_{N}+\sum_{j=k+1}^{N} \Big[\frac{1-(-\gamma)^{j-k+1}}{1+\gamma}\Delta(t_{j}\rightarrow t_{j-1}|\boldsymbol{z}_{j}) - \frac{\gamma+(-\gamma)^{j-k+1}}{1+\gamma}\Delta(t_{j-1}\rightarrow t_{j}|\boldsymbol{z}_{j-1}))\Big] \nonumber \\
&\hspace{3mm} + \Delta(t_{k}\rightarrow t_{k-1}|\boldsymbol{z}_k) \label{equ:z_kminus1_proof_2nd} 
\end{align}
We can confirm from (\ref{equ:z_kminus1_proof_2nd}) that $\boldsymbol{z}_{k-1}$ is indeed identical to (\ref{equ:BDIA_fb_int_ave}) with $i=k-1$. The proof is complete.

\section{Coupled Bidirectional Integration Approximation (CBDIA) as an Extension of EDICT }
\label{appendix:CBDIA}

In this section, we introduce the concept of bidirectional integration approximation into EDICT. The new sampling method is referred to as \emph{coupled bidirectional integration approximation (CBDIA)}. Similarly to EDICT, CBDIA also needs to  evaluate the neural network model $\hat{\boldsymbol{\epsilon}}_{\boldsymbol{\theta}}$ two times in general per timestep. We will show that CBDIA under a special parameter-setup reduces to BDIA with $\gamma=1$.  

\subsection{Update expressions of CBDIA}
In this subsection, we present the sampling update-expressions of CBDIA. Suppose at timestep $t_i$, we have a pair of diffusion states $(\boldsymbol{z}_i,\boldsymbol{y}_i)$. The pair of diffusion states $(\boldsymbol{z}_{i-1},\boldsymbol{y}_{i-1})$ at timestep $t_{i-1}$ are computed to be   
\begin{align}
\boldsymbol{w}_{i-1} &= \boldsymbol{\boldsymbol{z}_i}+\overbrace{\Delta(t_i\rightarrow t_{i-1}|\boldsymbol{y}_i)}^{\textcolor{blue}{\approx \int_{t_i}^{t_{i-1}}\boldsymbol{d}(\boldsymbol{z},t)dt}} \label{equ:CBDIA_1} \\
\boldsymbol{v}_{i-1} &=\boldsymbol{y}_i - \underbrace{\Delta(t_{i-1}\rightarrow t_i|\boldsymbol{w}_{i-1}) }_{\textcolor{blue}{\approx -\int_{t_{i}}^{t_{i-1}}\boldsymbol{d}(\boldsymbol{z},t)dt}}  \label{equ:CBDIA_2} 
\end{align}
\begin{align}
\boldsymbol{z}_{i-1} &=\gamma_1\boldsymbol{w}_{i-1} + (1-\gamma_1)\boldsymbol{v}_{i-1}\label{equ:CBDIA_3} \\
\boldsymbol{y}_{i-1} &=\gamma_2\boldsymbol{w}_{i-1} + (1-\gamma_2)\boldsymbol{v}_{i-1},\label{equ:CBDIA_4} 
\end{align}
where $\gamma_1,\gamma_2\in [0,1]$ and $\gamma_1\neq \gamma_2$. 
It is seen from the above equations that the computation of $\boldsymbol{w}_{i-1}$ involves forward integration approximation $\Delta(t_i\rightarrow t_{i-1}|\boldsymbol{y}_i)$ while $\boldsymbol{v}_{i-1}$ is computed by using the backward integration approximation $\Delta(t_{i-1}\rightarrow t_{i}|\boldsymbol{w}_{i-1})$.

Now we show that (\ref{equ:CBDIA_1})-(\ref{equ:CBDIA_4}) are also invertable. Suppose we obtain $(\boldsymbol{z}_{i-1},\boldsymbol{y}_{i-1})$ at timestep $t_{i-1}$. The pair of diffusion states  $(\boldsymbol{z}_{i},\boldsymbol{y}_{i})$ at timestep $t_i$ can be easily computed to be 
\begin{align}
\boldsymbol{v}_{i-1} &= \frac{\gamma_1 \boldsymbol{y}_{i-1}-\gamma_2\boldsymbol{z}_{i-1}}{\gamma_1-\gamma_2} \label{equ:CBDIA_inv_1} \\
\boldsymbol{w}_{i-1} &= \frac{(1-\gamma_1) \boldsymbol{y}_{i-1}-(1-\gamma_2)\boldsymbol{z}_{i-1}}{\gamma_2-\gamma_1} \label{equ:CBDIA_inv_2} \\
\boldsymbol{y}_{i} &=\boldsymbol{v}_{i-1} + \Delta(t_{i-1}\rightarrow t_i|\boldsymbol{w}_{i-1}) \label{equ:CBDIA_inv_3} \\
\boldsymbol{z}_{i} &= \boldsymbol{\boldsymbol{w}_{i-1}}-\Delta(t_i\rightarrow t_{i-1}|\boldsymbol{y}_i).\label{equ:CBDIA_inv_4} 
\end{align}

\subsection{CBDIA including BDIA with $\gamma=1$ as a special case}
In this subsection, we show that CBDIA with  $(\gamma_1,\gamma_2)=(0,1)$ reduces to BDIA with $\gamma=1$. It is straightforward that (\ref{equ:CBDIA_1})-(\ref{equ:CBDIA_4}) under $(\gamma_1,\gamma_2)=(0,1)$ take a special form of 
\begin{align}
\boldsymbol{y}_{i-1} &= \boldsymbol{\boldsymbol{z}_i}+\Delta(t_i\rightarrow t_{i-1}|\boldsymbol{y}_i) \label{equ:CBDIA_1_BDIA} \\
\boldsymbol{z}_{i-1} &=\boldsymbol{y}_i - \Delta(t_{i-1}\rightarrow t_i|\boldsymbol{y}_{i-1}),  \label{equ:CBDIA_2_BDIA} 
\end{align}
which, under the assumption of  $\boldsymbol{z}_N=\boldsymbol{y}_N$, can be reformulated  as
\begin{align}
\boldsymbol{y}_{N-1} &=  \boldsymbol{y}_{N} +  \Delta(t_N\rightarrow t_{N-1}|\boldsymbol{y}_{N}) \label{equ:CBDIA_1_BDIA_0} \\
\boldsymbol{y}_{i-1} &=  \boldsymbol{y}_{i+1} - \Delta(t_i\rightarrow t_{i+1}|\boldsymbol{y}_{i}) + \Delta(t_i\rightarrow t_{i-1}|\boldsymbol{y}_{i}) & i< N-1\label{equ:CBDIA_1_BDIA_1}  \\
\boldsymbol{z}_{i-1} &=   \boldsymbol{y}_{i} -\Delta(t_{i-1}\rightarrow t_{i}|\boldsymbol{y}_{i-1}) & \hspace{-10mm}i<N. \label{equ:CBDIA_2_BDIA_2}
\end{align}
One can easily conclude from (\ref{equ:CBDIA_1_BDIA_1}) that the update expression for $\boldsymbol{y}_{i-1}$ is identical to that of BDIA with $\gamma=1$. 
It is not difficult to derive the update expressions for  $(\boldsymbol{y}_i,\boldsymbol{z}_i)$ in terms of the historical integration approximations. We summarize the results in a lemma below: 
\begin{lemma}
Suppose $\boldsymbol{z}_N=\boldsymbol{y}_N$, $\{(\boldsymbol{y}_i,\boldsymbol{z}_i)| i<N\}$ under the update procedure of (\ref{equ:CBDIA_1_BDIA_0})-(\ref{equ:CBDIA_2_BDIA_2}) can be represented in terms of $\{\Delta(t_j\rightarrow t_{j-1}|\boldsymbol{y}_j), \Delta(t_{j-1}\rightarrow t_{j}|\boldsymbol{y}_{j-1})\}_{j=N}^{j=i-1}$ as 
\begin{align}
\boldsymbol{y}_i =& \boldsymbol{y}_N + \Delta(t_{N}-t_{N-1}|\boldsymbol{y}_{N})\,\mathrm{mod}\,(N{-}i, 2) \hspace{-0.6mm}\nonumber \\
&+\hspace{-0.6mm} \sum_{j=i+1}^{N-1}(-\Delta(t_j\hspace{-0.6mm}\rightarrow \hspace{-0.6mm}t_{j+1}|\boldsymbol{y}_j)\hspace{-0.6mm}+\hspace{-0.6mm}\Delta(t_j\hspace{-0.6mm}\rightarrow \hspace{-0.6mm} t_{j\hspace{-0.6mm}-\hspace{-0.6mm}1}|\boldsymbol{y}_j))\,\mathrm{mod}\,(j\hspace{-0.6mm}-\hspace{-0.6mm}i,2)  \label{equ:CBDIA_1_BDIA_3} \\
\boldsymbol{z}_{i} =&   \boldsymbol{y}_N + \Delta(t_{N}-t_{N-1}|\boldsymbol{y}_{N})\,\mathrm{mod}\,(N{-}i-1, 2) \hspace{-0.6mm}\nonumber\\
&+\hspace{-0.6mm} \sum_{j=i+2}^{N-1}(-\Delta(t_j\hspace{-0.6mm}\rightarrow \hspace{-0.6mm}t_{j+1}|\boldsymbol{y}_j)\hspace{-0.6mm}+\hspace{-0.6mm}\Delta(t_j\hspace{-0.6mm}\rightarrow \hspace{-0.6mm} t_{j\hspace{-0.6mm}-\hspace{-0.6mm}1}|\boldsymbol{y}_j))\,\mathrm{mod}\,(j\hspace{-0.6mm}-\hspace{-0.6mm}i-1,2) -\Delta(t_{i}\rightarrow t_{i+1}|\boldsymbol{y}_{i}), \label{equ:CBDIA_1_BDIA_4}
\end{align}
where the directions of integration approximations for $\boldsymbol{z}_i$ are opposite of those for $\boldsymbol{y}_i$. 
\label{lemma:CBDIA_BDIA}
\end{lemma}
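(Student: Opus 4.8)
The plan is to reduce the lemma to Proposition~\ref{prop:BDIA_fb_int_ave} and then read off the $\boldsymbol{z}$-sequence from the $\boldsymbol{y}$-sequence by a single reindexing. First I would note that, after the relabelling $\boldsymbol{z}_j\mapsto\boldsymbol{y}_j$, the initialisation (\ref{equ:CBDIA_1_BDIA_0}) and the recursion (\ref{equ:CBDIA_1_BDIA_1}) are term-for-term identical to (\ref{equ:BDIA_init}) and to the $\gamma=1$ specialisation (\ref{equ:BDIA_i}) of the BDIA update, the increments $\Delta(t_j\rightarrow t_{j-1}|\cdot)$ and $\Delta(t_{j}\rightarrow t_{j+1}|\cdot)$ being the same DDIM increments of (\ref{equ:ddim_f})--(\ref{equ:ddim_b}) merely evaluated at the $\boldsymbol{y}$-states. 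Since the proof of Proposition~\ref{prop:BDIA_fb_int_ave} in Appendix~\ref{appendix:BDIA_DDIM_ave_proof} is purely algebraic in these increment symbols --- and, as stressed in the Remark following the Definition, never uses the closed form of $\Delta$ --- it applies verbatim, and taking $\gamma=1$ in (\ref{equ:BDIA_fb_int}) yields (\ref{equ:CBDIA_1_BDIA_3}) for every $i\leq N-1$ (for $i=N-1$ it collapses to (\ref{equ:CBDIA_1_BDIA_0}), since $\mathrm{mod}\,(1,2)=1$ and the sum is empty).

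For the $\boldsymbol{z}$-sequence, shifting the index in (\ref{equ:CBDIA_2_BDIA_2}) and using $\Delta(t_{(i+1)-1}\rightarrow t_{i+1}|\boldsymbol{y}_{(i+1)-1})=\Delta(t_i\rightarrow t_{i+1}|\boldsymbol{y}_i)$ gives, for all $i\leq N-1$,
\begin{align}
\boldsymbol{z}_i = \boldsymbol{y}_{i+1} - \Delta(t_i\rightarrow t_{i+1}|\boldsymbol{y}_i). \nonumber
\end{align}
I would then substitute (\ref{equ:CBDIA_1_BDIA_3}) evaluated at $i+1$ in place of $\boldsymbol{y}_{i+1}$: the factor $\mathrm{mod}\,(N-i,2)$ multiplying $\Delta(t_N\rightarrow t_{N-1}|\boldsymbol{y}_N)$ becomes $\mathrm{mod}\,(N-i-1,2)$, the summation range shrinks from $j\in\{i+1,\dots,N-1\}$ to $j\in\{i+2,\dots,N-1\}$ with parity weight $\mathrm{mod}\,(j-i-1,2)$, and the subtracted term $-\Delta(t_i\rightarrow t_{i+1}|\boldsymbol{y}_i)$ is appended --- which is precisely (\ref{equ:CBDIA_1_BDIA_4}) (the boundary case $i=N-1$ collapsing to (\ref{equ:CBDIA_2_BDIA_2}) with $i=N$). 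The ``opposite directions'' claim then follows by inspection: a slot indexed by $j$ enters $\boldsymbol{y}_i$ with weight $\mathrm{mod}\,(j-i,2)$ and enters $\boldsymbol{z}_i$ with the complementary weight $\mathrm{mod}\,(j-i-1,2)$ (and likewise $\mathrm{mod}\,(N-i,2)$ vs.\ $\mathrm{mod}\,(N-i-1,2)$ for the $\Delta(t_N\rightarrow t_{N-1}|\boldsymbol{y}_N)$ term), while the slot $[t_{i+1},t_i]$ --- inactive in $\boldsymbol{y}_i$ --- is supplied to $\boldsymbol{z}_i$ backward by the extra term.

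I do not expect a genuine obstacle: the one point needing care is justifying that Proposition~\ref{prop:BDIA_fb_int_ave}, stated for the DDIM $\boldsymbol{z}$-sequence, may be invoked for the CBDIA $\boldsymbol{y}$-sequence --- handled by the observation that its proof never touches the explicit form of $\Delta$. Should one wish to avoid that appeal, both formulas can instead be proved jointly by a direct induction on $N-i$, exactly mirroring the computation in Appendix~\ref{appendix:BDIA_DDIM_ave_proof}; the key algebraic fact there, that the coefficients of the forward and backward increments for each slot sum to $1$ (trivially $\{0,1\}$ when $\gamma=1$), is what makes the parities line up.
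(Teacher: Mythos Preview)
Your proposal is correct and follows the same approach the paper itself indicates: the paper does not give a detailed proof of this lemma, but immediately before stating it observes that ``the update expression for $\boldsymbol{y}_{i-1}$ is identical to that of BDIA with $\gamma=1$'' and then says the rest is ``not difficult to derive.'' Your reduction to Proposition~\ref{prop:BDIA_fb_int_ave} (with $\gamma=1$) for the $\boldsymbol{y}$-sequence, followed by the one-line substitution $\boldsymbol{z}_i=\boldsymbol{y}_{i+1}-\Delta(t_i\rightarrow t_{i+1}|\boldsymbol{y}_i)$ and reindexing, is exactly the intended argument fleshed out.
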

The results in Lemma~\ref{lemma:CBDIA_BDIA} show that CBDIA with $(\gamma_1,\gamma_2)=(0,1)$ indeed reduces to BDIA with $\gamma=1$.  

\subsection{CBDIA with $(\gamma_1,\gamma_2)=(1,0)$}
In this subsection, we consider the special case of CBDIA with $(\gamma_1,\gamma_2)=(1,0)$. It is immediate that (\ref{equ:CBDIA_1})-(\ref{equ:CBDIA_4}) under the setup $(\gamma_1,\gamma_2)=(1,0)$ can be simplified to be
\begin{align}
\boldsymbol{z}_{i-1} &= \boldsymbol{\boldsymbol{z}_i}+\Delta(t_i\rightarrow t_{i-1}|\boldsymbol{y}_i)\label{equ:CBDIA_1_back} \\
\boldsymbol{y}_{i-1} &=\boldsymbol{y}_i - \Delta(t_{i-1}\rightarrow t_i|\boldsymbol{z}_{i-1}). \label{equ:CBDIA_2_back} 
\end{align}
One can easily represent $(\boldsymbol{y}_i,\boldsymbol{z}_i)$ in terms of the historical integration approximations.  We summarize the results in a lemma below: 
\begin{lemma}
Suppose $\boldsymbol{z}_N=\boldsymbol{y}_N$, $\{(\boldsymbol{y}_i,\boldsymbol{z}_i)| i<N\}$ under the update procedure of (\ref{equ:CBDIA_1_back})-(\ref{equ:CBDIA_2_back}) can be represented in terms of $\{\Delta(t_j\rightarrow t_{j+1}|\boldsymbol{z}_j)\}_{j=N-1}^{i}$ and $\{\Delta(t_j\rightarrow t_{j-1}|\boldsymbol{y}_j)\}_{j=N}^{i+1}$ as 
\begin{align}
\boldsymbol{y}_i & = \boldsymbol{y}_N - \sum_{j=i}^{N-1}\Delta(t_j\rightarrow t_{j+1}|\boldsymbol{z}_j) \nonumber\\
\boldsymbol{z}_i & = \boldsymbol{z}_N + \sum_{j=i+1}^{N}\Delta(t_j\rightarrow t_{j-1}|\boldsymbol{y}_j). \nonumber
\end{align}
\end{lemma}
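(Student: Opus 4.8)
The plan is to establish both closed-form identities simultaneously by backward induction on the index $i$, running from $i=N$ down to $i=0$. The two expressions are coupled through the update equations, so a joint induction is the natural route: at each step the formula for $\boldsymbol{z}_{i-1}$ will be read off from the formula for $\boldsymbol{z}_i$ via the forward update (\ref{equ:CBDIA_1_back}), and the formula for $\boldsymbol{y}_{i-1}$ will be read off from the formula for $\boldsymbol{y}_i$ via the backward update (\ref{equ:CBDIA_2_back}).

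For the base case I take $i=N$: both sums are empty, so the claimed identities collapse to $\boldsymbol{y}_N=\boldsymbol{y}_N$ and $\boldsymbol{z}_N=\boldsymbol{z}_N$, which hold trivially. The hypothesis $\boldsymbol{z}_N=\boldsymbol{y}_N$ is what makes the coupled recursion well-posed from a single seed, but it is not otherwise needed, since the $\boldsymbol{z}$-identity only references $\boldsymbol{z}_N$ and the $\boldsymbol{y}$-identity only references $\boldsymbol{y}_N$.

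For the inductive step, assume both identities hold at index $i$. Substituting the hypothesis for $\boldsymbol{z}_i$ into (\ref{equ:CBDIA_1_back}) gives
\begin{align}
\boldsymbol{z}_{i-1} = \boldsymbol{z}_N + \sum_{j=i+1}^{N}\Delta(t_j\rightarrow t_{j-1}|\boldsymbol{y}_j) + \Delta(t_i\rightarrow t_{i-1}|\boldsymbol{y}_i) = \boldsymbol{z}_N + \sum_{j=i}^{N}\Delta(t_j\rightarrow t_{j-1}|\boldsymbol{y}_j), \nonumber
\end{align}
which is exactly the asserted form at index $i-1$. Similarly, substituting the hypothesis for $\boldsymbol{y}_i$ into (\ref{equ:CBDIA_2_back}) yields $\boldsymbol{y}_{i-1}=\boldsymbol{y}_N-\sum_{j=i}^{N-1}\Delta(t_j\rightarrow t_{j+1}|\boldsymbol{z}_j)-\Delta(t_{i-1}\rightarrow t_i|\boldsymbol{z}_{i-1})=\boldsymbol{y}_N-\sum_{j=i-1}^{N-1}\Delta(t_j\rightarrow t_{j+1}|\boldsymbol{z}_j)$, the asserted form at index $i-1$. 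Note that $\boldsymbol{z}_{i-1}$ must be formed before $\boldsymbol{y}_{i-1}$ so that it is available as the conditioning state in $\Delta(t_{i-1}\rightarrow t_i|\cdot)$; this is precisely the order prescribed by (\ref{equ:CBDIA_1_back})-(\ref{equ:CBDIA_2_back}).

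I expect no real obstacle here: the argument is pure index bookkeeping, because each update equation contributes exactly one new $\Delta$ term whose index extends the relevant partial sum by one. The only thing to keep straight is the two coupling directions — $\boldsymbol{y}$ accumulates backward updates $\Delta(t_j\rightarrow t_{j+1}|\boldsymbol{z}_j)$ with a minus sign while $\boldsymbol{z}$ accumulates forward updates $\Delta(t_j\rightarrow t_{j-1}|\boldsymbol{y}_j)$ with a plus sign — and checking that the summation endpoints line up after each index shift. Equivalently, one could simply telescope the recursion directly to reach the same closed form.
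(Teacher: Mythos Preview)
Your proposal is correct and matches the paper's approach: the paper does not give a detailed proof of this lemma at all, remarking only that ``one can easily represent $(\boldsymbol{y}_i,\boldsymbol{z}_i)$ in terms of the historical integration approximations,'' which is exactly the telescoping/backward-induction argument you spell out. Your handling of the base case, the order of updating $\boldsymbol{z}_{i-1}$ before $\boldsymbol{y}_{i-1}$, and the index bookkeeping are all correct.
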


If we let $(\gamma_1,\gamma_2)=(\phi, 1-\phi)$ in CBDIA, the update expressions for $(\boldsymbol{y}_i,\boldsymbol{z}_i)$ should include an average of forward and backward integration approximations per historical timeslot $[t_{j+1}, t_{j}]$, $j\leq i$. 

\subsection{CBDIA by using DDIM update procedure (CBDIA-DDIM)}
In this subsection, we consider the special case that the DDIM update procedure is used to approximate $\Delta(t_i\rightarrow t_{i-1}|\boldsymbol{y}_i)$ and $\Delta(t_{i-1}\rightarrow t_{i}|\boldsymbol{w}_{i-1})$ in (\ref{equ:CBDIA_1})-(\ref{equ:CBDIA_2}). It is immediate that (\ref{equ:CBDIA_1})-(\ref{equ:CBDIA_4}) can be represented as
\begin{align}
\boldsymbol{w}_{i-1} &= \boldsymbol{z}_i+\overbrace{\alpha_{i-1} \left(\frac{\boldsymbol{y}_i \hspace{-0.3mm}-\hspace{-0.3mm} \sigma_{i}\hat{\boldsymbol{\epsilon}}_{\boldsymbol{\theta}}(\boldsymbol{y}_i, i) }{\alpha_{i}}\right)+\hspace{0.5mm}\sigma_{i-1}\hat{\boldsymbol{\epsilon}}_{\boldsymbol{\theta}}(\boldsymbol{y}_i, i)}^{\textcolor{blue}{\textrm{forward DDIM update}}}-\boldsymbol{y}_i \label{equ:CBDIA_DDIM_1} \\
\boldsymbol{v}_{i-1}&=\boldsymbol{y}_i - \left(\overbrace{\alpha_{i} \left(\frac{\boldsymbol{w}_{i-1} \hspace{-0.3mm}-\hspace{-0.3mm} \sigma_{i-1}\hat{\boldsymbol{\epsilon}}_{\boldsymbol{\theta}}(\boldsymbol{w}_{i-1}, i-1) }{\alpha_{i-1}}\right)+\hspace{0.5mm}\sigma_{i}\hat{\boldsymbol{\epsilon}}_{\boldsymbol{\theta}}(\boldsymbol{w}_{i-1}, i-1)}^{\textcolor{blue}{\textrm{backward DDIM update}}}-\boldsymbol{w}_{i-1}\right)  \label{equ:CBDIA_DDIM_2} \\
\boldsymbol{z}_{i-1} &= \gamma_1 \boldsymbol{w}_{i-1}+(1-\gamma_1)\boldsymbol{v}_{i-1}. \label{equ:CBDIA_DDIM_3} \\
\boldsymbol{y}_{i-1} &= \gamma_2 \boldsymbol{w}_{i-1}+(1-\gamma_2)\boldsymbol{v}_{i-1}. \label{equ:CBDIA_DDIM_4} 
\end{align}
We refer to the above sampling procedure as CBDIA-DDIM. With (\ref{equ:CBDIA_DDIM_1})-(\ref{equ:CBDIA_DDIM_4}), one can then easily derive the inverse update expressions.

\section{Design of BDIA-DPM-Solver++}
\label{appendix:BDIA-DPM-Solver}

The method DPM-Solver++\footnote{Lu et al., DPM-Solver++: Fast Solver for Guided Sampling of Diffusion Probabilistic Models, arXiv:2211.01095.} in StableDiffusion V2 has a configurable implementation with several options.
The results in Table~\ref{tab:t2i_FID_compare} were obtained by using the multi-step 2nd-order DPM-Solver++, which is the default setup in StableDiffusion. At each timestep $t_j$, the pre-trained DNN model produces an estimator $\hat{\boldsymbol{x}}_{\boldsymbol{\theta}}(\boldsymbol{z}_j, \phi=P, t_j)$ of the clean-image, where $P$ denotes the text prompt. In general, when $i<N$, the diffusion state $\boldsymbol{z}_{i-1}$ is computed by making use of the two most recent clean-image estimators $\{\hat{\boldsymbol{x}}_{\boldsymbol{\theta}}(\boldsymbol{z}_j, \phi=P, t_j)|j=i+1,i\}$ as well as the current state $\boldsymbol{z}_i$.  For simplicity, let us denote the update expression of DPM-Solver++ for computing $\boldsymbol{z}_{i-1}$ at timestep $t_i$ as 
\begin{align}
\boldsymbol{z}_{i-1} = \Gamma_{t_i\rightarrow t_{i-1}}\left(\boldsymbol{z}_i,\{\hat{\boldsymbol{x}}_{\boldsymbol{\theta}}(\boldsymbol{z}_j, \phi=P, t_j)|j=i+1, i\}\right). \label{equ:DPM_solver_i}
\end{align}

Next we present the update expressions of BDIA-DPM-Solver++, which is designed primarily for improving sampling quality rather than enabling diffusion inversion. The diffusion state $\boldsymbol{z}_{i-1}$ at timestep $t_i$ is computed to be 
\begin{align}
\boldsymbol{z}_{i-1} =& \boldsymbol{z}_{i+1}+ \overbrace{(1-\gamma)(\boldsymbol{z}_i-\boldsymbol{z}_{i+1})-\gamma \Delta(t_i\rightarrow t_{i+1}|\boldsymbol{z}_i)}^{\textcolor{blue}{\approx \int_{t_{i+1}}^{t_{i}}\boldsymbol{d}(\boldsymbol{z},t)dt}} \nonumber \\
&+(\Gamma_{t_i\rightarrow t_{i-1}}\left(\boldsymbol{z}_i,\{\hat{\boldsymbol{x}}_{\boldsymbol{\theta}}(\boldsymbol{z}_j, \phi=P, t_j)|j=i+1, i\}\right)-\boldsymbol{z}_i),
\end{align}
where $\Delta(t_i\rightarrow t_{i+1}|\boldsymbol{z}_i)$ is the backward integration approximation over the timeslot $[t_{i+1},t_i]$. In practice, one can employ DDIM to realize $\Delta(t_i\rightarrow t_{i+1}|\boldsymbol{z}_i)$. 

\begin{remark}
In principle, our BDIA technique can also be incorporated into other high-order ODE solvers such as PLMS and DEIS by following a similar design procedure for BDIA-DPM-Solver++ presented above. We omit the details here. 
\end{remark}

\vspace{-2mm}
\section{Tested Pre-trained Models for BDIA-DDIM and BDIA-EDM}
\label{appendix:pre_trained_models}
\vspace{-2mm}

 \begin{table}[h!]
\caption{\small Tested pre-trained models in Table~\ref{tab:t2i_FID_compare}, \ref{tab:BDIA_DDIM} and \ref{tab:BDIA_EDM}  \vspace{0mm} } 
\label{tab:pre_trained_models}
\centering
\begin{tabular}{|c|c|}
\hline
 \small{$\begin{array}{c}\textrm{BDIA-DDIM}\\ \textrm{(text-to-image sampling)}\end{array}$ }  &  {\scriptsize $\begin{array}{c}\textrm{v2-1}\_\textrm{512-ema-pruned.ckpt} \\ 
\textrm{ from \url{ https://huggingface.co/stabilityai/stable-diffusion-2-1-base/tree/main}}\end{array}$} \\
\hline
 \small{$\begin{array}{c}\textrm{BDIA-DDIM}\\ \textrm{(unconditional sampling)}\end{array}$ }  & \small {$\begin{array}{c}\textrm{ddim}\_\textrm{cifar10.ckpt} 
\textrm{ from \url{https://github.com/luping-liu/PNDM}} \\
\textrm{ddim}\_\textrm{celeba.ckpt} 
\textrm{ from \url{https://github.com/luping-liu/PNDM}}\end{array}$}  \\ 
\hline 
 \small{\textrm{BDIA-EDM}}  & \small { $\begin{array}{c}\textrm{ 
 edm-cifar10-32x32-cond-vp.pkl from \url{https://github.com/NVlabs/edm}} \\
 \textrm{ 
 edm-afhqv2-64x64-uncond-vp.pkl from \url{https://github.com/NVlabs/edm}} \\
 \textrm{ 
 edm-ffhq-64x64-uncond-vp.pkl from \url{https://github.com/NVlabs/edm}} \\
 \textrm{ 
 edm-imagenet-64x64-cond-adm.pkl from \url{https://github.com/NVlabs/edm}}
 \end{array}$ }  \\
\hline 
\end{tabular}
\end{table}

\end{document}